\documentclass{article}

\usepackage{microtype}
\usepackage{graphicx}
\usepackage{caption}
\usepackage{subcaption}
\usepackage{booktabs} 

\PassOptionsToPackage{hyphens}{url}\usepackage{hyperref}

\usepackage[preprint]{icml2026}

\usepackage{amsmath}
\usepackage{amssymb}
\usepackage{mathtools}
\usepackage{amsthm}
\usepackage{bm}

\usepackage{xparse}
\usepackage{xcolor}
\usepackage{enumitem}
\usepackage{multirow}
\usepackage{tabularx}

\usepackage[capitalize,noabbrev]{cleveref}

\usepackage[normalem]{ulem}
\usepackage{pifont}
\usepackage{ant}

\theoremstyle{plain}
\newtheorem{theorem}{Theorem}[section]
\newtheorem{proposition}[theorem]{Proposition}
\newtheorem{lemma}[theorem]{Lemma}
\newtheorem{corollary}[theorem]{Corollary}
\theoremstyle{definition}
\newtheorem{definition}[theorem]{Definition}

\theoremstyle{remark}

\usepackage[disable, textsize=tiny]{todonotes}

\newcommand{\xmark}{\ding{55}}

\newcommand{\ddt}{\frac{\mathrm{d}}{\mathrm{d}t}}
\newcommand{\R}{\mathbb{R}}

\icmltitlerunning{Generator-based Graph Generation via Heat Diffusion}

\begin{document}

\twocolumn[
  \icmltitle{Generator-based Graph Generation via Heat Diffusion }


  \icmlsetsymbol{equal}{*}

  \begin{icmlauthorlist}
    \icmlauthor{Anthony Stephenson}{lanc,bris}
    \icmlauthor{Ian Gallagher}{melbourne}
    \icmlauthor{Christopher Nemeth}{lanc}
  \end{icmlauthorlist}

  \icmlaffiliation{bris}{Department of Mathematics, University of Bristol, Bristol BS8 1UG, UK.}
  \icmlaffiliation{melbourne}{School of Mathematics and Statistics,
       University of Melbourne,
       Parkville, VIC, 3010, Australia.}
  \icmlaffiliation{lanc}{MARS: Mathematics for AI in Real-world Systems, School of Mathematical Sciences
       Lancaster University
       Lancaster, LA1 4YF, UK}

  \icmlcorrespondingauthor{Anthony Stephenson}{ant.stephenson@bristol.ac.uk}
  \icmlcorrespondingauthor{Ian Gallagher}{ian.gallagher@unimelb.edu.au}
  \icmlcorrespondingauthor{Christopher Nemeth}{c.nemeth@lancaster.ac.uk}

  \icmlkeywords{Machine Learning, ICML}

  \vskip 0.3in
]



\printAffiliationsAndNotice{}  

\begin{abstract}
Graph generative modelling has become an essential task due to the wide range of applications in chemistry, biology, social networks, and knowledge representation. In this work, we propose a novel framework for generating graphs by adapting the Generator Matching \citep{holderrieth2024generator} paradigm to graph‐structured data. We leverage the graph Laplacian and its associated heat kernel to define a continuous‐time diffusion on each graph. The Laplacian serves as the infinitesimal generator of this diffusion, and its heat kernel provides a family of conditional perturbations of the initial graph. A neural network is trained to match this generator by minimising a Bregman divergence between the true generator and a learnable surrogate. Once trained, the surrogate generator is used to simulate a time‐reversed diffusion process to sample new graph structures. 
Our framework unifies and generalises existing diffusion‐based graph generative models, injecting domain‐specific inductive bias via the Laplacian, while retaining the flexibility of neural approximators. Experimental studies demonstrate that our approach captures structural properties of real and synthetic graphs effectively.
\end{abstract}

\section{Introduction}
\label{sec:intro}

Graph generative modelling has emerged as a critical research area due to the broad applicability of graphs in representing complex systems such as molecular structures \citep{Sylvester1878ChemistryAA, de2018molgan}, social networks \citep{nettleton2013data}, and knowledge graphs \citep{schneider1973course, ji2021survey}. The goal in graph generation is to learn the underlying distribution of a collection of observed graphs and generate new samples that exhibit similar structural properties. Applications range from de novo molecule design in chemistry \citep{brown2019guacamol} to synthetic network construction for privacy‐preserving data sharing \citep{fu2023privacy}. However, \emph{structured} data like graphs pose unique challenges for generative models, owing to their discrete combinatorial nature and the permutation symmetries among node labels. 

Early approaches to graph generation include graph-structured variational autoencoders \citep{kipf2016variational}, autoregressive models \citep{you2018graphrnn}, and generative adversarial networks \citep{wang2018graphgan}. While these methods achieved partial success, they often struggled to capture higher-order dependencies or to enforce fundamental invariances such as permutation symmetry. More recently, score-based generative models—also known as diffusion models—have demonstrated remarkable performance in continuous domains such as images by progressively transforming noise into data via a forward diffusion and learned reverse process \citep{song2020score}. This success has spurred increasing interest in adapting diffusion models to graph-structured data, giving rise to \emph{graph diffusion models}. Existing approaches typically operate on adjacency matrices or node and edge features, and have shown promising results in tasks such as molecule generation and network completion \citep{vignac2022digress,yang2023score}. Notably, some recent methods formulate permutation-invariant stochastic differential equations that jointly diffuse node and edge attributes \citep{jo2022score}, while others apply diffusion in a continuous latent space obtained via graph autoencoders. Despite these advances, defining principled and effective diffusion processes \emph{directly} on discrete graph structures remains challenging, motivating the development of more general, generator-centric frameworks.

\begin{figure*}[t]
    \centering
    \begin{subfigure}[h]{0.65\textwidth}
    \includegraphics[width=\linewidth]{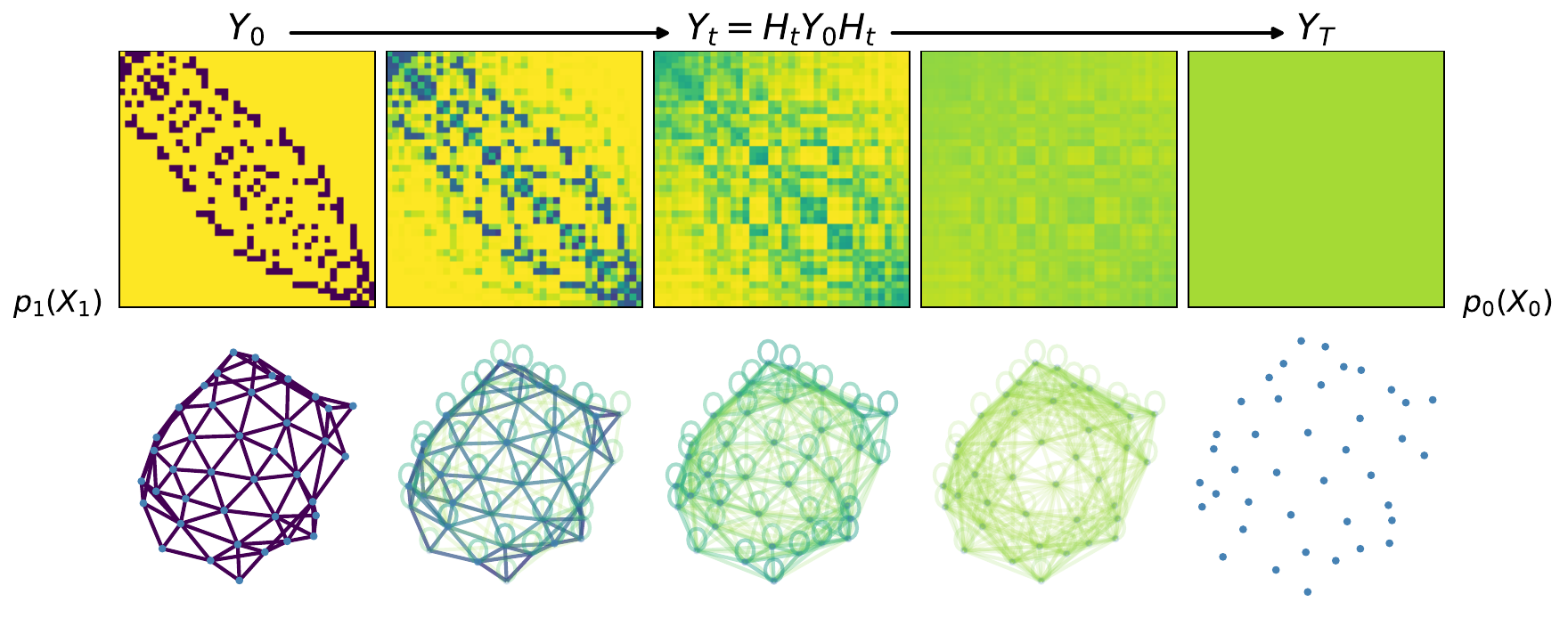}
    \caption{}\label{fig:forward_process}
    \end{subfigure}
    \hfill
    \begin{subfigure}[h]{0.34\textwidth}
    \includegraphics[width=\linewidth]{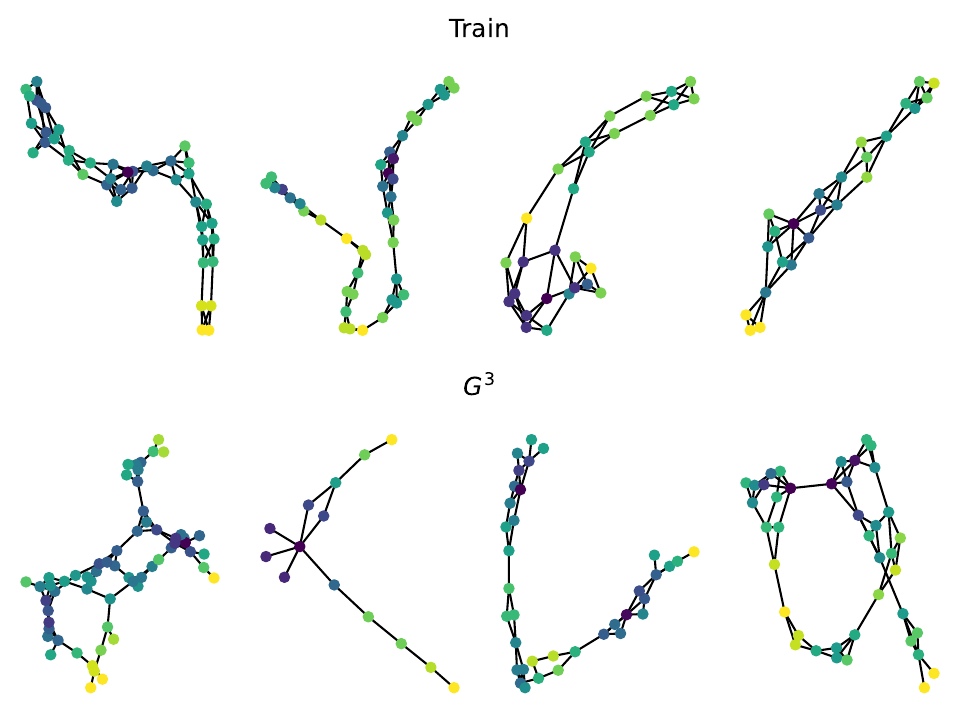}
    \caption{}\label{fig:example_enzyme_graphs}
    \end{subfigure}
    \caption{\subref{fig:forward_process} shows the effect of the symmetric heat diffusion process on a graph (bottom row) and its adjacency (top) as $t$ increases and the probability mass is spread over all possible edges. Colours indicate this value, with yellow and purple the respective extremes of the interval $[0,1]$ and edge transparency governed by the same quantity. \subref{fig:example_enzyme_graphs} shows a series of randomly drawn graphs from the (training) subset of the Enzyme dataset (top) and output graphs from $G^3$ (bottom). Node colouring reflects Forman-Ricci curvature \citep{forman2003bochner} at that node.}
    \label{fig:forward_process_and_example}
\end{figure*}

\paragraph{Contributions.} In this paper we propose \emph{Generator-based Graph Generation} ($G^3$), a generator-matching approach to graph generative modelling in which we (a) choose a principled, topology-aware \emph{forward} noising mechanism given by Laplacian heat diffusion on a matrix representation of a graph, (b) exploit its closed-form infinitesimal generator to define an explicit learning target, and (c) train a neural surrogate generator from forward-diffused samples. At sampling time, we integrate the learned reverse-time ODE from a simple, permutation-symmetric base distribution to obtain a real-valued graph matrix, which is then mapped to a discrete adjacency matrix. Figure~\ref{fig:forward_process} illustrates the forward-reverse behaviour of the $G^3$ framework.

\section{Background}
\label{sec:background}

Continuous-time generative models can be formulated in terms of deterministic or stochastic dynamics that transform samples from a simple reference distribution into samples from a complex data distribution. One of the most popular approaches for learning such dynamics is \emph{flow matching}, and its extension \emph{generator matching}. In addition to reviewing these paradigms in a general setting, we also establish the graph-theoretic preliminaries—matrix representations and Laplacian heat diffusion—that underpin our $G^3$ framework introduced in Section~\ref{sec:graph-generation}.

\subsection{Graph Matrix Representations}
\label{sec:graph-matrices}

We consider undirected simple graphs $G=(V,E)$ with $n=|V|$ nodes $V=\{1,\ldots,n\}$ and edge set $E\subseteq \{\{i,j\}: i,j\in V,\ i\neq j\}$. The adjacency matrix $A\in\{0,1\}^{n\times n}$ represents the connectivity in the graph where $A_{ij}=1$ iff $\{i,j\}\in E$, and $A_{ii}=0$ otherwise. The degree matrix $D=\mathrm{diag}(d_1,\ldots,d_n)$ is the diagonal matrix of node degrees $d_i=\sum_{j=1}^n A_{ij}$.

The combinatorial Laplacian $L = D - A$ is a symmetric positive semidefinite matrix that describes heat diffusion on an undirected graph. For a connected graph, the Laplacian has eigenvalue 0 with multiplicity 1 with the all-one eigenvector $\mathbf{1}$, $L\mathbf{1}=0$.

Other versions of the Laplacian matrix normalise the graph structure in different ways, e.g., the random-walk Laplacian $L_\text{rw} = I - D^{-1}A$. The following methodology applies to alternative Laplacian matrices but, by default, results refer to the combinatorial Laplacian unless stated otherwise.

\subsection{Heat Equation on Graphs and the Heat Kernel}
\label{sec:heat-kernel}

We consider heat diffusion on an undirected graph and introduce its solution, known as the \emph{graph heat kernel}. This construction provides a notion of diffusion on graphs that will be utilised in our generative framework. 

\begin{definition}[Heat equation on a graph]\label{def:heat-eq}
Let $G=(V,E)$ be an undirected graph with $n=|V|$ nodes, graph Laplacian $L \in \R^{n\times n}$ and heat kernel
\begin{equation}\label{eq:heat-kernel-def}
H_s := e^{-sL},\qquad s\ge 0.
\end{equation}
Given an initial matrix-valued graph representation $Y_0\in\R^{n\times n}$ (e.g.\ $Y_0=A$ the adjacency matrix), define the \emph{symmetric heat diffusion} by
\begin{equation}\label{eq:symmetric-heat-diffusion}
Y_s := H_s\,Y_0\,H_s,\qquad s\ge 0.
\end{equation}
\end{definition}

Since $\frac{\mathrm d}{\mathrm ds}H_s=-L H_s=-H_sL$, differentiating \eqref{eq:symmetric-heat-diffusion} yields the matrix-valued heat equation
\begin{equation}\label{eq:heat-matrix-ode}
\frac{\mathrm d}{\mathrm ds}Y_s
=
- L Y_s - Y_s L,
\qquad
Y_{0}=A.
\end{equation}
(See Proposition \ref{prop:heat-solution} in Appendix \ref{app:heat} for a proof.) As $L$ is symmetric positive semidefinite, the operator $Y\mapsto LY+YL$ is dissipative, and \eqref{eq:heat-matrix-ode} defines a stable linear diffusion on graph-derived matrices (see Lemmas \ref{lem:dissipative-LYplusYL} and \ref{lem:semigroup-contraction} in Appendix \ref{app:theory-dissipativity} and Figure \ref{fig:forward_process} for an illustration). Intuitively, left and right multiplication by $H_s$ smooths the structure encoded in $Y_0$ in a manner consistent with the graph topology.

Although \eqref{eq:symmetric-heat-diffusion} can be viewed via its associated semigroup and long-time limit, we defer that perspective to Section~\ref{sec:graph-diffusion-generator}, where it directly motivates our generator-matching formulation and the choice of base distribution.

\subsection{Flow Matching}
\label{sec:flow-matching}

Flow matching \citep{lipman2022flow} is a framework for generative modelling in which one learns a deterministic \emph{flow} that transports probability mass from a single, often simple, initial distribution to a target distribution over a continuous state space $\mathcal{X}\subseteq\mathbb{R}^d.$ The approach can be viewed as a deterministic analogue of diffusion-based generative modelling \citep{song2020score}, in which stochastic dynamics are replaced by ordinary differential equations (ODEs). 

\paragraph{Vector fields and probability paths.} Let $p_\text{init}$ denote a reference distribution on $\mathcal{X}$ (often a Gaussian or uniform distribution), and let $p_\text{data}$ denote the target data distribution. A flow matching model introduces a time-dependent vector field $u:\mathcal{X} \times [0,1] \to \mathcal{X}$ and defines trajectories $(X_t)_{t\in[0,1]}$ via the ODE
\begin{equation}
    \label{eq:flow-ode}
    \frac{\mathrm{d}}{\mathrm{d}t} X_t = u_t(X_t), \qquad X_0 \sim p_{\text{init}}.
\end{equation}
The solution of this ODE induces (under mild regularity conditions) a \emph{flow map} $\psi_t:\mathcal{X}\to\mathcal{X},$ such that $X_t=\psi_t(X_0).$ The goal is to learn a vector field $u$ for which the terminal distribution $X_1$ matches the target data distribution $p_{\text{data}}$.

Rather than directly specifying $u$ through a likelihood or score function, flow matching begins by prescribing a \emph{probability path} $(p_t)_{t\in[0,1]}$ interpolating between $p_{\text{init}}$ and $p_{\text{data}}$. A time-dependent vector field $u^*:\mathcal{X}\times[0,1]\to\mathcal{X}$ is said to be \emph{consistent} with this path if $(p_t)$ satisfies the continuity equation
\begin{equation}\label{eq:fm-continuity}
\partial_t p_t(x) + \nabla\cdot\big(p_t(x),u^*_t(x)\big)=0,
\quad t\in(0,1).
\end{equation}

\paragraph{Surrogate vector field and objective.} A neural vector field $u^\theta:\mathcal{X}\times[0,1]\to\mathcal{X}$ is then trained to approximate $u^*$ by minimising an expected discrepancy
\begin{equation}\label{eq:fm-loss}
\mathcal{L}{_\text{FM}}(\theta)=
\mathbb{E}_{t\sim\mathrm{Unif},x\sim p_t}\Big[D\big(u^*_t(x),u^\theta_t(x)\big)\Big],
\end{equation}
where $D(\cdot,\cdot)$ is a \emph{Bregman divergence}, typically the mean-squared error or KL divergence in practice.

 Flow matching avoids numerical issues commonly associated with simulating stochastic differential equations \citep{song2020score}, admits flexible choices of probability paths, and can be implemented using standard ODE solvers at sampling time \citep{chen2018neuralode}. However, the framework fundamentally operates at the level of \emph{vector fields}, and does not explicitly constrain the learned dynamics to correspond to any underlying stochastic process or infinitesimal generator. 

\subsection{Generator Matching}
\label{sec:generator-matching}

Generator matching \citep{holderrieth2024generator} is a framework for learning continuous-time generative models in which the focus shifts from matching vector fields to matching \emph{infinitesimal generators} of Markov processes. Let $(X_t)_{t\in[0,1]}$ be a (possibly \emph{time-inhomogeneous}) continuous-time Markov process on a state space $\mathcal{X}$ with (infinitesimal) generator $\mathcal{J}_t$. For sufficiently regular test functions $f:\mathcal{X}\to\mathbb{R}$ in the domain of $\mathcal{J}_t$, the generator is defined by
\begin{equation}\label{eq:gm-generator-def}
(\mathcal{J}_tf)(x) := \lim_{h\downarrow 0}\frac{\mathbb{E}\!\left[f(X_{t+h})\mid X_t=x\right]-f(x)}{h}.
\end{equation}

\paragraph{Semigroup and Kolmogorov equations.}
The generator viewpoint is tightly linked to the Kolmogorov equations describing how expectations and marginal laws evolve. In the \emph{time-homogeneous} case (i.e.\ when $\mathcal{J}_t\equiv \mathcal{J}$ does not depend on $t$), the process induces a one-parameter Markov semigroup $(S_t)_{t\ge 0}$ acting on observables $f:\mathcal{X}\to\mathbb{R}$ by
\[
(S_t f)(x) := \mathbb{E}\!\left[f(X_t)\mid X_0=x\right], \qquad t\ge 0,
\]
and for $f\in\mathrm{Dom}(\mathcal{J})$ this semigroup satisfies the Kolmogorov backward equation
\[
\partial_t(S_t f) = \mathcal{J}(S_t f), \qquad S_0 f = f.
\]
(Equivalently, $\partial_t(S_t f)=S_t(\mathcal{J}f)$.) Dually, if $p_t$ denotes the marginal law of $X_t$ and $\mathcal{J}^*$ is the adjoint of $\mathcal{J}$ in the weak sense, then $p_t$ evolves according to the forward (Fokker--Planck/master) equation
\[
\partial_t p_t = \mathcal{J}^* p_t .
\]
In the more general time-\emph{inhomogeneous} setting one works with a family $(\mathcal{J}_t)_{t\in[0,1]}$ and the corresponding evolution of $(S_t f)$ and $(p_t)$ is still governed by these backward/forward equations with $\mathcal{J}$ replaced by $\mathcal{J}_t$. This is precisely why generator matching is posed in terms of \emph{generator actions}: matching $\mathcal{J}_t$ (or $\mathcal{J}$) enforces consistency with the induced evolution of both observables and distributions.

\paragraph{Surrogate generator and objective.} Under generator matching, one assumes access to a forward process whose (possibly time-dependent) generator $\mathcal{J}_t$ induces a family of marginals $(p_t)_{t\in[0,1]}$ that interpolate between $p_{\text{init}}$ and $p_{\text{data}}$. 
Rather than matching vector fields as with flow matching, the objective is to approximate $\mathcal{J}_t$ by a parameterised surrogate generator, $\mathcal{J}^\theta_t$, by minimising a discrepancy between their \emph{generator actions} evaluated at forward samples. For a suitable class of test functionals $\mathcal{F}$, training takes the form of minimising the loss
\begin{equation}\label{eq:gm-loss-f}
\mathcal{L}_{\text{GM}}(\theta)
=
\mathbb{E}_{t\sim\mathrm{Unif},\, X_t\sim p_t}
\Big[
D\big((\mathcal{J}_t f)(X_t),(\mathcal{J}^\theta_t f)(X_t)\big)
\Big],
\end{equation}
for $f\in\mathcal{F}$ and where $D(\cdot,\cdot)$ denotes a Bregman divergence.

\paragraph{Linear functional parameterisation.} In structured-state settings that we consider for $G^3$ we choose $\mathcal{F}$ to be linear functionals $f_A(X)=\langle A,X\rangle$ (e.g.\ $\langle A,X\rangle=\mathrm{tr}(A^\top X)$ for matrix states). When there exists an induced operator on states, still denoted $\mathcal{J}_t$ by abuse of notation, such that both
\begin{align*}
(\mathcal{J}_t f_A)(X) &= \langle A,\,\mathcal{J}_t (X)\rangle, \\
(\mathcal{J}^\theta_t f_A)(X) & =\langle A,\,\mathcal{J}^\theta_t(X)\rangle,
\end{align*}
and the objective reduces to the state-space form
\begin{equation}\label{eq:gm-loss-state}
\mathcal{L}_{\text{GM}}(\theta)
=
\mathbb{E}_{t\sim\mathrm{Unif},\, X_t\sim p_t}
\Big[
D\big(\mathcal{J}_t (X_t),\mathcal{J}^\theta_t(X_t)\big)
\Big],
\end{equation}
which we will adopt throughout Section~\ref{sec:graph-generation}. 

Flow matching can be viewed as a special case of generator matching when the generator corresponds to a deterministic ODE and the divergence is chosen appropriately. However, the generator-centric perspective is more expressive: it captures stochastic dynamics, jump processes, and structured state spaces, and naturally enforces consistency with the Kolmogorov equations governing the underlying process. This makes generator matching particularly appealing to structured data such as graphs, where a meaningful forward process can be directly defined in terms of graph operators. This observation forms the basis of our $G^3$ method.

\section{Graph Generation}
\label{sec:graph-generation}

We now introduce our generator-based graph generation method $G^3$. The construction starts from a graph noising mechanism given by a symmetric Laplacian heat diffusion on matrix-valued graph representations $(Y_s)_{s\in[0,T]}$ (see Section~\ref{sec:heat-kernel}), and converts this into a $[0,1]$-parametrised process $(X_t)_{t\in[0,1]}$ compatible with generator matching (Section~\ref{sec:generator-matching}) via the time-rescaling $X_t := Y_{T(1-t)}$. This yields an explicit ground-truth generator on graph states, which we approximate by a neural surrogate $\mathcal{J}^\theta_t$ using a generator matching objective. We then sample new graphs by numerically integrating the learned reverse-time ODE from a simple base distribution.

\subsection{Graph Diffusions and Generators}
\label{sec:graph-diffusion-generator}

We embed the graph heat diffusion into the generator matching framework of Section~\ref{sec:generator-matching}. The key point is that the symmetric heat diffusion admits a closed-form generator; by rescaling time, we obtain a process on $[0,1]$ consistent with the $X$-notation used in Sections~\ref{sec:flow-matching}--\ref{sec:generator-matching}.

\paragraph{Semigroup viewpoint and long-time limit.}
The forward heat diffusion \eqref{eq:symmetric-heat-diffusion} admits a convenient semigroup interpretation that
connects naturally to generator matching and the choice of a simple base state.
Define the linear maps, for $s\ge 0$,
\begin{equation}\label{eq:heat-semigroup}
\mathcal{P}_s(Y) := H_s\,Y\,H_s.
\end{equation}
Then $\{\mathcal{P}_s\}_{s\ge 0}$ is a strongly continuous semigroup on $\R^{n\times n}$, with
$\mathcal{P}_0=\mathrm{Id}$ and $\mathcal{P}_{s+t}=\mathcal{P}_s\circ\mathcal{P}_t$.
Moreover, its infinitesimal generator is
\[
(\mathcal{G}Y) = -\big(LY+YL\big),
\]
so that \eqref{eq:heat-matrix-ode} can be written abstractly as
$\frac{\mathrm d}{\mathrm ds}Y_s = \mathcal{G}Y_s$ (see Lemma~\ref{lem:semigroup} in Appendix~\ref{app:semi-generators}).

\paragraph{Limiting behaviour and probabilistic interpretation.}
For a connected graph, the Laplacian has a simple eigenvalue at $0$ with eigenvector $\mathbf 1$, and all other
eigenvalues are strictly positive; hence
\[
\lim_{s\to\infty} H_s = \Pi := \frac{1}{n}\mathbf 1 \mathbf 1^\top,
\]
the orthogonal projector onto constants. Consequently,
\[
\lim_{s\to\infty} Y_s
= \Pi Y_0 \Pi
= \frac{1}{n^2} \| Y_0 \|_1 \,\mathbf 1\mathbf 1^\top,
\]
for non-negative matrices $Y_0$. For connected graphs, this convergence is exponentially fast, with rate governed by the spectral gap of the Laplacian (see Lemma~\ref{lem:frob-norm} in Appendix~\ref{app:forward-reverse}).
Moreover, when using the random-walk Laplacian $L_\text{rw}$, $H_s$ admits the interpretation as the transition kernel of a continuous-time random walk on $V$ \citep{chung1997spectral}.

\paragraph{Time rescaling to $[0,1]$.}
Generator matching (and flow matching) are formulated on $t\in[0,1]$ with $X_0$ drawn from a simple reference distribution and $X_1\sim p_\text{data}$. To align with this convention we define, for a fixed horizon $T>0$,
\begin{equation}
\label{eq:Xdef}
X_t := Y_{T(1-t)},\qquad t\in[0,1].
\end{equation}
Thus, $X_0=Y_T$ is a highly diffused (simple) state, while $X_1=Y_0$ recovers the original graph representation.
Differentiating \eqref{eq:Xdef} yields
\begin{equation}
\label{eq:X-ode}
\frac{\mathrm d}{\mathrm dt}X_t
=
T\big(LX_t + X_t L\big),\qquad t\in[0,1].
\end{equation}
Equivalently, in terms of the forward generator $\mathcal G$ above, the reverse-time operator satisfies $\mathcal J = -T\,\mathcal G$.
The forward heat diffusion is contractive; its formal time-reversal corresponds to an anti-diffusive evolution that exponentially amplifies high-frequency (Laplacian eigenmode) components, rendering the exact reverse dynamics ill-posed and numerically unstable \citep{Hadamard23,Evans10} (see Lemma~\ref{lem:frob-norm-rev} in Appendix~\ref{app:forward-reverse}).

Our generator-matching approach circumvents this instability by learning a regularised surrogate generator $\mathcal{J}^\theta_t$ from forward-diffused samples, yielding a stable and well-conditioned reverse-time generative sampler.

\paragraph{Infinitesimal generator on states.}
For linear test functionals $f_A(X)=\langle A,X\rangle:=\mathrm{tr}(A^\top X)$ with $A\in\R^{n\times n}$, \eqref{eq:X-ode} implies
\[
\frac{\mathrm d}{\mathrm dt} f_A(X_t)
= \Big\langle A,\frac{\mathrm d}{\mathrm{d}t} X_t\Big\rangle=
\left\langle A,\; T(LX_t+X_tL)\right\rangle.
\]
Restricting attention to such linear functionals induces a (state-space) time-homogeneous linear operator, which we denote by $\mathcal{J}$, defined through
\begin{equation}
\label{eq:true-J-state}
(\mathcal{J} f_A)(X) =\langle A, \mathcal{J}(X)\rangle = \langle A, T\big(LX + XL\big)\rangle.
\end{equation}
 This $\mathcal{J}$ is fully determined by the graph Laplacian and therefore respects permutation equivariance and graph locality by construction.

\subsection{Graph Generator Training}
\label{sec:training}

\paragraph{Training data and forward diffusion.}
Let $\{Y^{(k)}_0\}_{k=1}^N$ denote a set of training data graphs, represented as matrices. 
Define the symmetric heat diffusion
\[
Y^{(k)}_s = e^{-sL^{(k)}}\,Y^{(k)}_0\,e^{-sL^{(k)}},\qquad s\in[0,T],
\]
and the time-rescaled process on $[0,1]$,
\[
X^{(k)}_t := Y^{(k)}_{T(1-t)},\qquad t\in[0,1].
\]
Noised graph training samples are obtained by drawing $t\sim\mathrm{Unif}[0,1]$ and forming $X^{(k)}_t$ via the above mapping. Throughout, we take the initial graph state to be the adjacency matrix $Y^{(k)}_0=A^{(k)}$ with associated Laplacian $L^{(k)}$. Since all of these objects are symmetric, we retain only the lower-triangular parts of matrices in our implementation, which implicitly removes the presence of any self-loops.

\paragraph{Surrogate generator and objective.}
From \eqref{eq:true-J-state}, the true generators for training graphs $\{Y^{(k)}_0\}_{k=1}^N$ acting on states are
\begin{equation}
\label{eq:true-J-k}
\mathcal{J}^{(k)}(X) = T\big(L^{(k)}X + X L^{(k)}\big).
\end{equation}

Using these target generators, we learn a surrogate generator $\mathcal{J}^\theta:\R^{n\times n} \times [0,1] \to\R^{n\times n}$ by minimising
\begin{equation}
\label{eq:gm-objective}
\E_{k}\E_{t\sim\mathrm{Unif}}
\Big[
D\big(
\mathcal{J}^{(k)}(X^{(k)}_t),\;
\mathcal{J}^\theta_t(X^{(k)}_t)
\big)
\Big],
\end{equation}
where $D$ is a Bregman divergence. In the experiments described in Section~\ref{sec:experiments}, we exclusively use mean-squared error loss, $D(A,B)=\|A-B\|_F^2$. See Algorithm \ref{alg:training}.

\subsection{Sampling via the Reverse-Time ODE}
\label{sec:sampling}

Having learned a surrogate generator $\mathcal{J}^\theta_t$ from the forward-noised graphs $\{Y_0^{(k)},Y_1^{(k)},\ldots,Y_T^{(k)}\}_{k=1}^N$, we generate new graphs under the time-rescaled setting by integrating the learned ODE from a simple base distribution at $t=0$ toward a structured graph state at $t=1$. Note that $t\mapsto X_t=Y_{T(1-t)}$ already reverses the original diffusion time $s$, so integrating forward in $t$ corresponds to a formal reversal of the heat diffusion in $s$.

\paragraph{Base distribution.}
We take $p_{\mathrm{init}}$ to be a permutation-symmetric distribution intended to approximate a highly diffused state $X_0=Y_T$ induced by the forward heat diffusion (cf.\ Section~\ref{sec:heat-kernel}). Concretely, we sample an auxiliary matrix $\widetilde X_0\in\R^{n\times n}$ by drawing its columns i.i.d as $\bm{\tilde x}_{0,i}\sim \mathrm{Dirichlet}(\alpha \mathbf{1})$, $i=1,\ldots,n$, enforcing symmetry, and scaling by the average degree $\avg{d}:=\tfrac2{nN}\sum_{k=1}^N \| Y^{(k)}_0\|_1$:
\begin{equation}
X_0 := \avg{d}(\widetilde X_0+\widetilde X_0^\top),
\end{equation}
with $\alpha>0$ controlling concentration around the uniform vector. While the exact law of $Y_T$ under the data distribution is intractable, $p_{\mathrm{init}}$ provides a simple surrogate capturing permutation symmetry and strong diffusion.

\paragraph{Simulating an ODE on $[0,1]$.}
For each training graph $k$, the time-rescaled heat dynamics satisfy
$\frac{\mathrm d}{\mathrm dt}X^{(k)}_t = \mathcal{J}^{(k)}(X^{(k)}_t)$ with $\mathcal{J}^{(k)}(X) = T(L^{(k)}X+XL^{(k)})$.
At generation time we do not condition on a specific Laplacian; instead we integrate the learned surrogate dynamics
\begin{equation}
\label{eq:learned-sampling-ode}
\frac{\mathrm d}{\mathrm dt}X_t = \mathcal{J}^\theta_t(X_t),\qquad t\in[0,1],\qquad X_0\sim p_{\mathrm{init}},
\end{equation}
where $\mathcal{J}^\theta_t$ has been trained to match the heat-generator action from the forward-diffused states.

\paragraph{Euler discretisation and stabilisation.}
Fix an integer $M\ge 1$ and set $\delta:=1/M$, with time grid $t_m := m\delta$ for $m=0,\ldots,M.$ Approximating $X_{t_m}$ by $X^m$, explicit Euler integration applied to \eqref{eq:learned-sampling-ode} yields 
\begin{equation}
\label{eq:euler}
X^{m+1} = X^{m} + \delta\,\mathcal{J}^\theta_t(X^{m}),\qquad m=0,\dots,M-1.
\end{equation}
To enforce basic validity constraints and improve numerical stability, we optionally apply a projection $\Pi$ after each step:
\begin{equation}
\label{eq:euler-proj}
X^{m+1} = \Pi\!\left(X^{m} + \delta\,\mathcal{J}^\theta_t(X^{m})\right).
\end{equation}
Typical projections include symmetrisation $\Pi_{\mathrm{sym}}(Z)=\tfrac12(Z+Z^\top)$, zeroing the diagonal, and element-wise clipping. The final discrete adjacency matrix is obtained by a thresholding operation, $\widehat A_{ij}=\mathbf{1}\{(\Pi_{\mathrm{sym}}X^{M})_{ij}\ge c\}$.

\begin{figure*}[t]
\begin{minipage}[t]{0.48\textwidth}
\begin{algorithmbox}[$G^3$ Training]
  \label{alg:training}
  \begin{algorithmic}
    \STATE {\bfseries Input:} Graph representations $\{Y_0^{(j)}\}_{j=1}^N$, maximum diffusion time $T$, batch size $b$, $n_{\mathrm{iter}}$, $\delta > 0$, learning rate $\eta$, minimum diffusion time, $\tau:=0.01$
    \REPEAT
    \FOR{batch {\bfseries in} batches}
    \STATE Sample diffusion times: $\{t_i\}_{i=1}^b \sim \mathrm{Unif}[\tau,T]$
    \STATE Diffuse states: $\{X^{(i)}_{s_i}\}_{i=1}^b$ for $s_i:=1-t_i/T$
    \STATE Generator targets: $\{T(L_iX^{(i)}_{s_i} + X^{(i)}_{s_i}L_i)\}_{i=1}^b$
    \STATE Predict generators: $\{\mc{J}_{t_i}^\theta(X^{(i)}_{s_i})\}_{i=1}^b$
    \STATE $\mathrm{loss}=\frac{1}{b}\sum_{i=1}^b\norm{\mc{J}_{t_i}^\theta(X^{(i)}_{s_i}) - \mc{J}_{t_i}(X^{(i)}_{s_i})}_F^2$
    \IF{$\mathrm{loss}$ unchanged after 10 iterations}
        \STATE $\eta\leftarrow \max\{\eta \times 0.99, \, 10^{-9}\}$
    \ENDIF
    \ENDFOR
    \UNTIL{$\mathrm{loss}< \delta$ or $n_{\mathrm{iter}}$ reached}
  \end{algorithmic}
\end{algorithmbox}
\end{minipage}
\hfill
\begin{minipage}[t]{0.48\textwidth}
  \begin{algorithmbox}[$G^3$ Sampling]
  \label{alg:sampling}
  \begin{algorithmic}
    \STATE {\bfseries Input:} Number of new graphs $N^*$, maximum diffusion time $T$, number of Euler steps $M$, base distribution concentration $\alpha>0$
    \STATE $\delta \leftarrow 1/M$
    \STATE $c\leftarrow \frac{1}{n^2N}\sum_{j=1}^N\|A^{(j)}\|_1$ \\
    \hfill \COMMENT{Sparsity threshold using average degree}
    \FOR{i=0 {\bfseries to} $N^*$}
    \STATE Sample $X^{(i)}_0 \sim p_{\mathrm{init}}^\alpha$
    \FOR{k=0 {\bfseries to} $M$}
        \STATE $X^{(i)}_{k+1} \leftarrow \Pi_{\mathrm{clip}}(X^{(i)}_k + \delta \mc{J}_t^\theta(X^{(i)}_k))$ \\
        \hfill \COMMENT{Euler steps with element clipping}
    \ENDFOR
            \STATE $\hat{A}^{(i)}_{ab} \leftarrow \1\{(\Pi_\text{sym} (X^{(i)}_M)_{ab} \geq c\}$ \\
            \hfill \COMMENT{Symmetrisation; diagonal zeroing; thresholding}
    \ENDFOR
  \end{algorithmic}
\end{algorithmbox}
\end{minipage}
\end{figure*}

\section{Related Work}
\label{sec:related-work}

Beyond the work discussed in \cref{sec:intro}, we highlight several representative graph generation methods. SPECTRE \citep{martinkus2022spectre} trains a sequence of GANs to sample eigenvalues and then reconstruct eigenvectors and an adjacency matrix; the innovation is that spectra encode the salient global topology. 

Recent progress has also focused on discrete adaptations of diffusion and flow-matching ideas, including DiGress \citep{vignac2022digress} and DeFoG \citep{qin2024defog}. DeFoG achieves strong empirical performance with theoretical guarantees but requires substantial compute, especially on larger graphs, due to its transformer architecture and relative random walk probability features (RRWP).

We use SPECTRE and DeFoG as high-quality, general-purpose baselines. To keep the study computationally tractable, we restrict attention to a small set of competitors and report results from our own reruns of their code, rather than quoting results from earlier papers, to support fair comparisons and reproducibility. 

Recent work by \citep{law2025directed} explores the use of heat kernels for graph generation in a related but fundamentally different setting. Their approach targets \emph{directed} graphs and relies on a stochastic heat kernel acting on learned node representations, coupled with a separate edge model that maps these representations back to a valid graph. In contrast, our framework defines and explicitly reverses a diffusion process directly on graph-derived matrix states via generator matching, without introducing an auxiliary edge reconstruction mechanism. As a result, our method applies equally to undirected graphs and, in principle, can be extended to directed settings within the same generator-centric formulation. Moreover, the edge perturbation and reconstruction steps employed in \cite{law2025directed} may alter or obscure global structural constraints, such as planarity, which our Laplacian-based diffusion preserves by construction.

\section{Experiments}
\label{sec:experiments}

\subsection{Datasets}
\label{sec:datasets}
To assess $G^3$ performance, we consider three synthetic (SBM, DCSBM, Planar) and three real-world graph datasets (Enzymes, Proteins, QM9), many of which are commonly used for benchmarking in the graph generation literature. The exception is the degree-corrected stochastic block model (DCSBM) dataset, which allows for more reasonable models of community-structured networks \citep{karrer2011stochastic}.
Details are provided in \cref{appx:datasets}.

\subsection{Evaluation}
\label{sec:evaluation}
To evaluate $G^3$ graph generation performance, we rely on the maximum mean discrepancy (MMD) of various graph attributes between hold-out samples of graphs and our generated samples, as is standard in the literature. Effectiveness is assessed by comparing the attribute distribution between hold-out test graphs and generated graphs for the following five graph characteristics: clustering coefficient, degree distribution, 4-node orbit count, graph Laplacian spectrum and triangle counts.
Further details are provided in \cref{appx:evaluation}.

\subsection{Neural Network Architecture}
\label{sec:architecture}
The $G^3$ generator surrogate $\mathcal{J}^\theta_t$ trained using \cref{alg:training} is a simple 4-layer MLP neural network with ReLU activation functions and layer normalisation. The network acts on the flattened form of the diffused network input state that is reshaped into a square matrix for output. In general, there are 4096 units per hidden layer, where the sensitivity of this design choice is explored in \cref{sec:behaviour}.
Further details are provided in \cref{appx:implementation}.

\subsection{Graph Generation Sensitivity}
\label{sec:behaviour}
To understand how sensitive $G^3$ is to graph topology, training dynamics and hyperparameters, we consider how effective the methodology is as these factors are systematically varied. 
We focus on the spectral MMD as the most comprehensive single metric over which to assess sensitivity. Additional figures are provided in \cref{appx:experiments}.

\paragraph{Sensitivity to maximum diffusion time.} \cref{fig:spec_v_tmax_by_N_planar} shows an example of how spectral MMD is sensitive to the choice of $T$, the maximum time states are diffused to during the forward training process and its dependence on the number of graphs, $N$. This provides us with a simple general principle that $T$ should increase with training set size, avoiding the need to treat this as a tunable hyperparameter.


\cref{fig:spec_v_tmax_by_n_planar} demonstrates less dependence on the number of nodes, $n$, than might have been expected; we can broadly opt to neglect its influence on this same choice of parameter. The minimum of the curves seems to occur at approximately the same $T$, indicating that the number of graphs, $N$, is the more important factor and that the problem gets ``harder'' as $n$ increases, at least for planar graphs.

\paragraph{Sensitivity to neural network architecture.} \cref{fig:spec_v_w_by_N_planar} shows an example of how spectral MMD is sensitive to the choice of the number of hidden neural network units per layer of the MLP, $w$, and its dependence on the number of graphs, $N$. The minimum occurs at approximately $2^{12}=4096$ hidden units, independent of $N$.

Based on this independence, \cref{tab:sbm_w} computes a small table for SBMs with fixed $N$ for use across all other datasets in our further experiments to save unnecessary computation. In other words, we do not tune the neural network architecture for each dataset in turn.




\paragraph{Sensitivity to base distribution.} \cref{fig:spec_v_alpha} shows how the choice of hyperparameter $\alpha$ in the base sampling distribution affects the efficacy of the model for SBM and planar graphs. The choice of $\alpha$ not only affects the spectral performance but also the ability to generate unique new graphs (see \cref{fig:nonu_v_alpha}).

Empirically, $\alpha < 1$ is preferable for planar-like graphs including the biomolecular datasets, while for community-like graphs such as SBM and DCSBM, $\alpha=1$ is preferable.



\newcounter{ribbontextmark} 
\newcommand{\SmoothRibbonText}{%
\,Curves are Gaussian-smoothed\!
  \ifnum\value{ribbontextmark}=0
    \protect\footnotemark
  \else
    \protect\footnotemark[\value{ribbontextmark}]
  \fi
   \,functions of the observations shown, which themselves are averaged over 10 random seeds. Shaded regions indicate 66\% confidence intervals under CLT assumptions. Learning rate set at $1\times 10^{-4}$ for training.
}

\newcommand{\ribbonfootnote}{%
  \ifnum\value{ribbontextmark}=0
    \footnotetext{Using \texttt{gaussian\_filter1d} function from SciPy package.}%
    \setcounter{ribbontextmark}{\value{footnote}}%
  \fi
}

\def\FigureScale{0.4}

\begin{figure}[t]
    \centering
    \includegraphics[scale=\FigureScale]{figures/spec_v_tmax_by_N_planar.pdf}
    \caption{Spectral MMD as a function of maximum diffusion time $T$ and the number of planar graphs $N$ with $n=64$ nodes.\SmoothRibbonText }
    \label{fig:spec_v_tmax_by_N_planar}
\end{figure}
\ribbonfootnote

\begin{figure}[t]
    \centering
    \includegraphics[scale=\FigureScale]{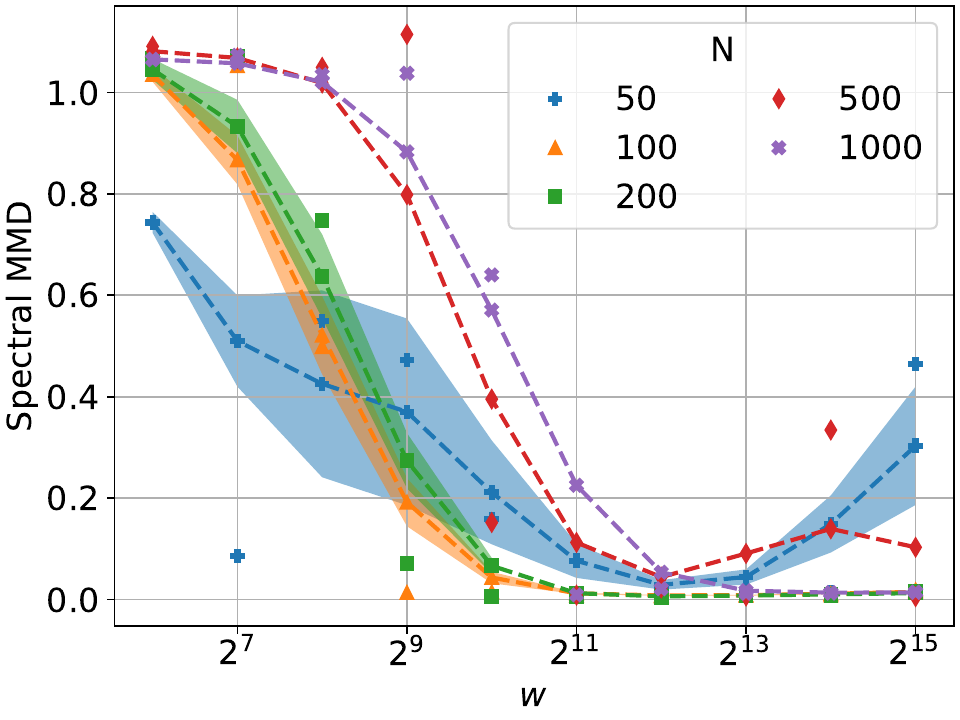}
    \caption{Spectral MMD as a function of the number of hidden neural network units per layer $w$ and the number of planar graphs $N$ with $n=64$ nodes.
    \SmoothRibbonText}
    \label{fig:spec_v_w_by_N_planar}
\end{figure}

\subsection{Performance}
\label{sec:performance}

\begin{table*}[h!]\footnotesize
\centering
 \caption{Median (± median abs.~deviation) of MMD metrics for $G^3$, DeFoG, SPECTRE over $3-5$ randomly seeded splits. Lower is better and \textbf{bold} indicates best performance; \textit{italics} second-best performance, based on the median plus one median abs.~deviation.}
\label{tab:all_models_train_mmd}
\begin{tabularx}{0.95\linewidth}{llccccc}
\toprule
Dataset & Model & Clustering & Degree & Orbit & Spectrum & Triangles \\
\midrule
\multirow[c]{4}{*}{SBM}
    & $G^3$        & \bfseries 0.0356 ± 0.0005 & \bfseries 0.0318 ± 0.0052 & 0.0631 ± 0.0120 & \itshape 0.0796 ± 0.0120 & \itshape 0.0270 ± 0.0078 \\
    & Asymm.~$G^3$ & \itshape 0.0398 ± 0.0010 & \itshape 0.0390 ± 0.0052 & 0.0710 ± 0.0140 & \bfseries 0.0832 ± 0.0015 & 0.0287 ± 0.0084 \\
    & DeFoG        & 0.0500 ± 0.0000 & 0.0258 ± 0.0190 & \itshape 0.0500 ± 0.0000 & 0.0884 ± 0.0740 & \bfseries 0.0097 ± 0.0083 \\
    & SPECTRE      & 0.0588 ± 0.0089 & 0.2030 ± 0.1700 & \bfseries 0.0500 ± 0.0000 & 0.7010 ± 0.4800 & 0.3320 ± 0.2300 \\
\midrule
\multirow[c]{4}{*}{DCSBM}
    & $G^3$        & \itshape 0.1270 ± 0.0091 & \itshape 0.0344 ± 0.0062 & \itshape 0.0971 ± 0.0057 & 0.1230 ± 0.0330 & \itshape 0.0110 ± 0.0098 \\
    & Asymm.~$G^3$ & 0.2340 ± 0.0160 & 0.0810 ± 0.0200 & 0.1420 ± 0.0170 & \itshape 0.1010 ± 0.0220 & 0.0578 ± 0.0250 \\
    & DeFoG        & \bfseries 0.0556 ± 0.0067 & \bfseries 0.0063 ± 0.0050 & \bfseries 0.0591 ± 0.0110 & \bfseries 0.0499 ± 0.0240 & \bfseries 0.0006 ± 0.0002 \\
    & SPECTRE      & 1.0800 ± 0.0340 & 0.6640 ± 0.0007 & 1.0400 ± 0.0004 & 1.2500 ± 0.0250 & 0.7340 ± 0.0130 \\
\midrule
\multirow[c]{4}{*}{Planar}
    & $G^3$        & 0.3090 ± 0.0300 & \itshape 0.0030 ± 0.0015 & \itshape 0.0035 ± 0.0018 & \itshape 0.0060 ± 0.0015 & 0.0467 ± 0.0360 \\
    & Asymm.~$G^3$ & 1.0200 ± 0.0440 & 0.0074 ± 0.0050 & 0.0106 ± 0.0089 & 0.0113 ± 0.0021 & 0.1110 ± 0.0320 \\
    & DeFoG        & \bfseries 0.0413 ± 0.0047 & \bfseries 0.0007 ± 0.0001 & \bfseries 0.0002 ± 0.0000 & \bfseries 0.0006 ± 0.0003 & \bfseries 0.0056 ± 0.0013 \\
    & SPECTRE      & \itshape 0.1420 ± 0.0220 & 0.0242 ± 0.0190 & 0.4240 ± 0.3800 & 0.0560 ± 0.0019 & \itshape 0.0226 ± 0.0026 \\
\midrule
\multirow[c]{4}{*}{Enzymes}
    & $G^3$        & \bfseries 0.0225 ± 0.0008 & \itshape 0.0854 ± 0.0550 & 0.0902 ± 0.0390 & \itshape 0.0755 ± 0.0220 & \itshape 0.0490 ± 0.0380 \\
    & Asymm.~$G^3$ & 0.0803 ± 0.0200 & 0.0934 ± 0.0520 & \itshape 0.1040 ± 0.0047 & 0.1090 ± 0.0480 & 0.0752 ± 0.0120 \\
    & DeFoG        & 0.0765 ± 0.0036 & 0.7220 ± 0.0070 & 0.5530 ± 0.0140 & 1.2000 ± 0.0270 & 0.4650 ± 0.0033 \\
    & SPECTRE      & \itshape 0.0275 ± 0.0002 & \bfseries 0.0168 ± 0.0013 & \bfseries 0.0796 ± 0.0120 & \bfseries 0.0490 ± 0.0030 & \bfseries 0.0236 ± 0.0029 \\
\midrule
\multirow[c]{4}{*}{Proteins}
    & $G^3$        & \bfseries 0.0465 ± 0.0009 & \bfseries 0.0220 ± 0.0012 & \itshape 0.5530 ± 0.0720 &\itshape 0.1600 ± 0.0074 & \itshape 0.0928 ± 0.018 \\
    & Asymm.~$G^3$ & \itshape 0.0750 ± 0.0073 & \itshape 0.0345 ± 0.0003 & \bfseries 0.3480 ± 0.2000 & \bfseries 0.0921 ± 0.0110 & \bfseries 0.0419 ± 0.0071 \\
    & DeFoG        & 0.1050 ± 0.0032 & 0.7500 ± 0.0003 & 1.2700 ± 0.0000 & 1.2000 ± 0.0110 & 0.5800 ± 0.0006 \\
    & SPECTRE      & 0.1070 ± 0.0001 & 0.0867 ± 0.0480 & 0.8400 ± 0.0085 & 0.1310 ± 0.0430 & 0.1240 ± 0.0110 \\
\midrule
\multirow[c]{4}{*}{QM9}
    & $G^3$        & 0.2530 ± 0.0074 & 0.1870 ± 0.0400 & 0.1650 ± 0.0330 & 0.1100 ± 0.0027 & 0.1100 ± 0.0066 \\
    & Asymm.~$G^3$ & \itshape 0.0834 ± 0.0170 & \itshape 0.0214 ± 0.0033 & 0.0036 ± 0.0013 & \itshape 0.0168 ± 0.0033 & \itshape 0.0521 ± 0.0130 \\
    & DeFoG        & 0.5460 ± 0.3000 & 0.0449 ± 0.0170 & \itshape 0.0036 ± 0.0001 & 0.0640 ± 0.0049 & 0.1730 ± 0.0950 \\
    & SPECTRE      & \bfseries 0.0200 ± 0.0110 & \bfseries 0.0027 ± 0.0001 & \bfseries 0.0008 ± 0.0001 & \bfseries 0.0046 ± 0.0003 & \bfseries 0.0051 ± 0.0043 \\
\bottomrule
\end{tabularx}
\end{table*}

We now compare the performance of $G^3$, using both symmetric and asymmetric formulations of the heat diffusion strategy, to both SPECTRE and DeFoG over a selection of synthetic and real-world datasets. The asymmetric version of $G^3$ is outlined in \cref{appx:asymmetric}.

We compute the median $\pm$ median absolute deviation of the MMD between hold-out test graphs and generated graphs reported in \cref{tab:all_models_train_mmd}, where lower quantities are better. Although there is no clear winner over all of the datasets and metrics, $G^3$ is not only competitive, but \emph{substantially} faster to run. Training and generation time per sample graph takes seconds for $G^3$ compared to 15-30 minutes for DeFoG and SPECTRE (see \cref{fig:train_times_v_n}).

To highlight this point, we repeated the DeFoG and SPECTRE experiments on datasets where they previously performed well, but under reduced computational budgets. The number of training epochs was decreased by a factor of 10-100, although their total training times still generally exceeded those of $G^3$. We denote these constrained variants by DeFoG* and SPECTRE* and report the results in \cref{tab:short_models_train_mmd}. Under these conditions, $G^3$ tends to outperform the alternative methods, particularly with respect to degree distribution and spectral accuracy.

Although on the whole the symmetric heat diffusion leads to better performance than the asymmetric version, there is no great margin, and on certain datasets (QM9, Enzymes and Proteins) the asymmetric process is preferred. One might suppose that biomolecular structures lend themselves more to the asymmetric approach, although the superior performance of the symmetric version on the planar synthetic dataset effectively rules out planarity as an explanation.  

\section{Discussion}
In this work, we have introduced a novel, well-grounded framework for graph generation by integrating symmetric heat diffusion with generator matching. The use of the graph Laplacian as the generator injects domain-specific inductive bias, ensuring that the noising process inherently respects graph topology (e.g., local connectivity and geometry). Empirically, we have shown, via controlled simulation studies and real-world graph generation benchmarks, that our method is competitive and highly scalable.


An appealing aspect of this methodology is the number of avenues for further development. Our general framework is permutation-invariant in construction, but our current implementation uses a simple MLP for generator matching. While the experiments show success in this simple setting, we believe this could be further improved by using a permutation-equivariant neural network architecture such as GNNs.

The theoretical framework outlined in \cref{sec:training}--\ref{sec:sampling} relies on the heat kernel associated with the combinatorial graph Laplacian. Alternatives
such as the random walk Laplacian could be used, as the underlying $G^3$ framework is more general (as motivated in \cref{appx:method}).

Relatedly, we focus on a symmetric form of heat diffusion on undirected graphs, but by relaxing the symmetry assumption we could extend this work to directed graphs. In a similar vein, we expect other more exotic structures, including hypergraphs, to be amenable to the $G^3$ strategy too via a suitable hypergraph Laplacian.

An extension of particular interest to the authors is to incorporate graph and node covariate information for conditional generation, known as ``guidance'' in the generative literature. A toy example of how this might work is provided in \cref{appx:conditional_generation}, where we generate sample SBMs with pre-specified group membership.


\pagebreak

\section*{Acknowledgements}
 CN gratefully alcknowledges the support of EPSRC grants EP/V022636/1 and EP/Y028783/1. This work was also supported by Research England under the Expanding Excellence in England (E3) funding stream, which was awarded to MARS: Mathematics for AI in Real-world Systems in the School of Mathematical Sciences at Lancaster University. 

\section*{Impact Statement}
This paper presents a broad and adaptable framework for generating graphs, aiming to support a wide range of applications that include drug discovery, general graph synthesis, circuit design and cybersecurity. Advances in graph generation hold the potential to influence many scientific and technological fields and may carry both beneficial and challenging societal or ethical implications, particularly within biomedical and chemical research. However, at this stage, we do not anticipate any immediate societal concerns related to the methods introduced here.





\bibliography{refs}
\bibliographystyle{icml2026}

\newpage
\appendix
\onecolumn

\section{Proofs and Further Details}
\label{app:proofs}

\subsection{Heat Equation and Heat Kernel}
\label{app:heat}

\begin{proposition}[Solution of the symmetric heat diffusion via the heat kernel]\label{prop:heat-solution}
Let $L\in\mathbb{R}^{n\times n}$ be the (combinatorial) Laplacian of an undirected graph, and define the heat kernel $H_s := e^{-sL}$ for $s\ge 0$. For any initial matrix $Y_0\in\mathbb{R}^{n\times n}$, the matrix-valued symmetric heat equation
\begin{equation}\label{eq:heat-matrix-ode-2}
\frac{\mathrm d}{\mathrm ds}Y_s \;=\; -\big(LY_s + Y_sL\big),
\qquad s\ge 0,
\end{equation}
admits the unique solution
\begin{equation}\label{eq:heat-matrix-solution}
Y_s \;=\; H_s\,Y_0\,H_s \;=\; e^{-sL}\,Y_0\,e^{-sL}, \qquad s\ge 0.
\end{equation}
In particular, the family of linear maps $(\mathcal{P}_s)_{s\ge 0}$ defined by $\mathcal{P}_s(Y):=H_s Y H_s$ forms the (matrix) heat semigroup associated with the generator $Y\mapsto LY+YL$.
\end{proposition}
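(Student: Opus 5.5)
We verify existence by differentiating the proposed solution, prove uniqueness by reducing the equation to a linear ODE with a Lipschitz (in fact linear) right-hand side, and then read off the semigroup claims from the closed form.

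\textbf{Existence.} The map $s\mapsto H_s=e^{-sL}=\sum_{k\ge0}(-sL)^k/k!$ is real-analytic. Termwise differentiation of the absolutely convergent series gives
\[
\tfrac{\mathrm d}{\mathrm ds}H_s=-LH_s=-H_sL,
\]
where the second equality holds because $L$ commutes with every power of itself. Set $Y_s:=H_sY_0H_s$. The product rule then gives
\[
\tfrac{\mathrm d}{\mathrm ds}Y_s=(-LH_s)Y_0H_s+H_sY_0(-H_sL)=-(LY_s+Y_sL),
\]
and $H_0=I$ gives $Y_0$ at $s=0$. So the candidate solves the equation.

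\textbf{Uniqueness.} I will use an integrating-factor argument rather than invoking Picard--Lindel\"of. Let $Z_s$ be any $C^1$ solution with the same initial datum. Define $W_s:=e^{sL}Z_se^{sL}$. Differentiating, the terms $LW_s+W_sL$ produced by the outer factors cancel exactly against $e^{sL}(-(LZ_s+Z_sL))e^{sL}$, because $L$ commutes with $e^{sL}$. Hence $\tfrac{\mathrm d}{\mathrm ds}W_s=0$, so $W_s=W_0=Y_0$. Since $e^{sL}$ is invertible with inverse $e^{-sL}$, this gives $Z_s=e^{-sL}Y_0e^{-sL}=Y_s$.

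An alternative route is to view the equation as $\dot y=-(I\otimes L+L\otimes I)y$ on $\mathrm{vec}(Y)$. This is a finite-dimensional constant-coefficient linear ODE, so it has the unique solution $e^{-s(I\otimes L+L\otimes I)}\mathrm{vec}(Y_0)$. Because the Kronecker sum's two summands commute, this factorises as $e^{-sL}\otimes e^{-sL}$, which is $\mathrm{vec}(H_sY_0H_s)$.

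\textbf{Semigroup property.}
\begin{itemize}
\item $\mathcal P_0=\mathrm{Id}$ follows from $H_0=I$.
\item $\mathcal P_{s+t}=\mathcal P_s\circ\mathcal P_t$ follows from $H_{s+t}=H_sH_t$, which holds because $sL$ and $tL$ commute.
\item Strong continuity is automatic in finite dimensions: $\mathcal P_s$ depends continuously, indeed analytically, on $s$.
\item The generator is the derivative at $s=0$. By the existence computation, $\lim_{h\downarrow0}(\mathcal P_hY-Y)/h=-(LY+YL)$, so the generator is $Y\mapsto-(LY+YL)$. This matches the statement's ``generator $Y\mapsto LY+YL$'' up to the sign convention used for the operator $\mathcal G$ in the main text.
\end{itemize}

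\textbf{Main obstacle.} None of these steps is genuinely hard. The only point needing care is justifying the commutation $LH_s=H_sL$ and the identity $H_{s+t}=H_sH_t$, both of which follow from the power series. Beyond that, the one thing to handle explicitly is the sign convention in the final sentence of the statement.
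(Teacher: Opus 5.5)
Your proof is correct and is essentially the paper's: both differentiate $H_sY_0H_s$ with the product rule and $\frac{\mathrm d}{\mathrm ds}H_s=-LH_s=-H_sL$. You justify that derivative by the power series, whereas the paper uses the spectral decomposition $L=U\Lambda U^\top$, so your version does not even need symmetry of $L$. Your integrating-factor uniqueness argument with $W_s=e^{sL}Z_se^{sL}$ is a self-contained replacement for the paper's appeal to linear ODE theory after vectorisation.

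Your sign remark is also correct: $\lim_{h\downarrow 0}(\mathcal P_hY-Y)/h=-(LY+YL)$, which agrees with $\mathcal G$ in the main text. The statement, and the paper's semigroup lemma, loosely call $Y\mapsto LY+YL$ the generator.
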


\begin{proof}[Proof]
Since $L$ is real symmetric, it is orthogonally diagonalizable: $L = U\Lambda U^\top$ with $U$ orthonormal and $\Lambda=\mathrm{diag}(\lambda_1,\ldots,\lambda_n)$, $\lambda_i\ge 0$. Hence
\[
H_s := e^{-sL}=Ue^{-s\Lambda}U^\top,
\qquad \text{and} \qquad
\frac{\mathrm d}{\mathrm ds}H_s = -LH_s = -H_sL.
\]
Define $Y_s := H_s Y_0 H_s$. Differentiating and using the product rule,
\[
\frac{\mathrm d}{\mathrm ds}Y_s
=
\Big(\frac{\mathrm d}{\mathrm ds}H_s\Big)Y_0H_s + H_sY_0\Big(\frac{\mathrm d}{\mathrm ds}H_s\Big)
=
(-LH_s)Y_0H_s + H_sY_0(-H_sL)
=
-(LY_s + Y_sL),
\]
so $Y_s$ satisfies \eqref{eq:heat-matrix-ode}. Moreover $Y_{0}=H_0Y_0H_0=Y_0$. Uniqueness follows from standard linear ODE theory on $\mathbb{R}^{n\times n}$ (equivalently, by vectorizing the equation to a linear system on $\mathbb{R}^{n^2}$).
\end{proof}

\subsection{Dissipativity and Stability of the Symmetric Heat Diffusion}
\label{app:theory-dissipativity}

In Section~\ref{sec:heat-kernel} we consider the matrix-valued heat equation
\begin{equation}\label{eq:app-heat-matrix-ode}
\frac{\mathrm d}{\mathrm ds}Y_s \;=\; -\big(LY_s + Y_sL\big),\qquad Y_0\in\R^{n\times n},
\end{equation}
where $L\in\R^{n\times n}$ is the (combinatorial) graph Laplacian of an undirected graph, hence symmetric positive semidefinite.

We equip $\R^{n\times n}$ with the Frobenius inner product
\[
\langle A,B\rangle_F := \mathrm{tr}(A^\top B),\qquad \|A\|_F^2 := \langle A,A\rangle_F.
\]
Define the linear operator $\mathcal A:\R^{n\times n}\to\R^{n\times n}$ by
\[
\mathcal A(Y) := LY + YL,
\qquad\text{and}\qquad
\mathcal G(Y) := -\mathcal A(Y) = -(LY+YL).
\]

\begin{lemma}[Dissipativity of the symmetric heat operator]\label{lem:dissipative-LYplusYL}
If $L$ is symmetric positive semidefinite, then $\mathcal G$ is dissipative with respect to $\langle\cdot,\cdot\rangle_F$, i.e.
\[
\langle \mathcal G(Y),\, Y\rangle_F \;\le\; 0
\qquad\text{for all }Y\in\R^{n\times n}.
\]
Equivalently, $\langle \mathcal A(Y),Y\rangle_F\ge 0$ for all $Y$.
\end{lemma}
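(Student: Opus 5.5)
The plan is to compute $\langle \mathcal A(Y),Y\rangle_F$ directly and reduce it to a sum of manifestly nonnegative terms. This uses only the symmetry and positive semidefiniteness of $L$.

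First, expand using linearity of the trace and $\langle A,B\rangle_F=\mathrm{tr}(A^\top B)$:
\[
\langle \mathcal A(Y),Y\rangle_F=\mathrm{tr}\big((LY)^\top Y\big)+\mathrm{tr}\big((YL)^\top Y\big)=\mathrm{tr}(Y^\top L Y)+\mathrm{tr}(L Y^\top Y),
\]
where we use $L^\top=L$.

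Next, treat each term separately, without assuming $Y$ is symmetric.

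For the first term, write $Y$ by columns $y_1,\dots,y_n$. Then $\mathrm{tr}(Y^\top L Y)=\sum_j y_j^\top L y_j\ge 0$, since $L\succeq 0$.

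For the second term, use the cyclic property of the trace: $\mathrm{tr}(LY^\top Y)=\mathrm{tr}(Y L Y^\top)$. Writing $r_i^\top$ for the rows of $Y$, this equals $\sum_i r_i^\top L r_i\ge 0$.

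Alternatively, one can use the eigendecomposition $L=U\Lambda U^\top$ from the proof of Proposition~\ref{prop:heat-solution}. Setting $\tilde Y=U^\top Y U$ gives
\[
\langle\mathcal A(Y),Y\rangle_F=\sum_{i,j}(\lambda_i+\lambda_j)\tilde Y_{ij}^2\ge 0,
\]
which also exhibits the spectrum $\{\lambda_i+\lambda_j\}$ of $\mathcal A$. That is useful later for the contraction and convergence-rate lemmas.

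Finally, the equivalence of the two statements follows immediately from $\mathcal G=-\mathcal A$ and bilinearity: $\langle\mathcal G(Y),Y\rangle_F=-\langle\mathcal A(Y),Y\rangle_F\le 0$.

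There is no real obstacle here. The only point needing care is that $Y$ is a general, possibly non-symmetric, matrix. The argument must therefore avoid writing $Y^\top=Y$, and must handle the right-multiplication term through the rows of $Y$ (or through the cyclic trace property) rather than its columns.
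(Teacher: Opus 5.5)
Your proposal is correct and has the same skeleton as the paper's proof. You use the same split $\langle\mathcal A(Y),Y\rangle_F=\mathrm{tr}(Y^\top LY)+\mathrm{tr}(LY^\top Y)$ and the same column-by-column argument for the first term.

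You differ on the second term. The paper treats $\mathrm{tr}(Y^\top Y L)$ as the trace of a product of two positive semidefinite matrices and invokes $\mathrm{tr}(AB)=\|A^{1/2}B^{1/2}\|_F^2$, which needs matrix square roots. You instead cycle the trace to $\mathrm{tr}(YLY^\top)=\sum_i r_i^\top L r_i$. That is more elementary and makes the second term the exact mirror of the first (rows instead of columns), so the whole proof rests on nothing more than $x^\top Lx\ge 0$.

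Your eigenbasis alternative is a genuinely different and more informative route. It diagonalises the quadratic form exactly rather than just bounding it, which exhibits the spectrum $\{\lambda_i+\lambda_j\}$ of $\mathcal A$. It also gives quantitative statements directly: for a connected graph, $\langle\mathcal A(Y),Y\rangle_F\ge 2\lambda_2\|Y\|_F^2$ whenever $Y\mathbf 1=0$ and $\mathbf 1^\top Y=0$. This is the same spectral structure the paper relies on later for the exponential rates in Lemmas~\ref{lem:frob-norm} and~\ref{lem:frob-norm-rev}. Its only cost is the spectral theorem, which Proposition~\ref{prop:heat-solution} already supplies.
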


\begin{proof}
We compute
\[
\langle \mathcal A(Y), Y\rangle_F
= \langle LY, Y\rangle_F + \langle YL, Y\rangle_F
= \mathrm{tr}(Y^\top L Y) \;+\; \mathrm{tr}(Y^\top Y L).
\]
For the first term, write $Y=[y_1,\dots,y_n]$ in columns. Then
\[
\mathrm{tr}(Y^\top L Y) = \sum_{j=1}^n y_j^\top L y_j \;\ge\; 0
\]
since $L\succeq 0$ (i.e. positive semidefinite).

For the second term, note that $Y^\top Y\succeq 0$ and $L\succeq 0$. Using the identity
\[
\mathrm{tr}(AB) = \|A^{1/2}B^{1/2}\|_F^2\quad\text{for }A\succeq 0,\;B\succeq 0,
\]
we obtain
\[
\mathrm{tr}(Y^\top Y L) = \mathrm{tr}\big((Y^\top Y) L\big) \;\ge\; 0.
\]
Hence $\langle \mathcal A(Y), Y\rangle_F\ge 0$ and therefore
$\langle \mathcal G(Y), Y\rangle_F = -\langle \mathcal A(Y), Y\rangle_F\le 0$.
\end{proof}

\begin{corollary}[Energy dissipation and Frobenius stability]\label{cor:frob-stability}
Let $(Y_s)_{s\ge 0}$ solve \eqref{eq:app-heat-matrix-ode}. Then the Frobenius norm is nonincreasing:
\[
\frac{\mathrm d}{\mathrm ds}\|Y_s\|_F^2
= 2\langle \dot Y_s, Y_s\rangle_F
= 2\langle \mathcal G(Y_s), Y_s\rangle_F
\le 0,
\qquad\text{hence}\qquad
\|Y_s\|_F \le \|Y_0\|_F\;\;\forall s\ge 0.
\]
In particular, \eqref{eq:app-heat-matrix-ode} defines a stable linear diffusion on $\R^{n\times n}$ in the sense of contraction in $\|\cdot\|_F$.
\end{corollary}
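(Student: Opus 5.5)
The plan is to differentiate the squared Frobenius norm along the trajectory and then apply Lemma~\ref{lem:dissipative-LYplusYL}. By Proposition~\ref{prop:heat-solution}, the solution $Y_s = H_sY_0H_s$ exists and is unique, and it is smooth (indeed real-analytic) in $s$, because $H_s = Ue^{-s\Lambda}U^\top$ is a finite sum of exponentials. This regularity is what justifies differentiating $s\mapsto \|Y_s\|_F^2$.

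First, we expand $\|Y_s\|_F^2 = \langle Y_s,Y_s\rangle_F$. By bilinearity and symmetry of the Frobenius inner product, the product rule gives
\[
\frac{\mathrm d}{\mathrm ds}\|Y_s\|_F^2 = 2\langle \dot Y_s, Y_s\rangle_F .
\]
Next, we substitute the ODE \eqref{eq:app-heat-matrix-ode}, which gives $\dot Y_s = \mathcal G(Y_s)$. Lemma~\ref{lem:dissipative-LYplusYL}, applied pointwise in $s$ with $Y=Y_s$, then yields $\langle \mathcal G(Y_s),Y_s\rangle_F\le 0$. So the derivative is nonpositive for every $s\ge 0$.

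Finally, we integrate from $0$ to $s$, or equivalently invoke the mean value theorem. This shows $\|Y_s\|_F^2\le \|Y_0\|_F^2$, and taking square roots, which is monotone on $[0,\infty)$, gives $\|Y_s\|_F\le\|Y_0\|_F$.

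For the contraction interpretation, the same bound applies to the difference $Y_s - Y_s'$ of two solutions, since that difference also solves the linear equation \eqref{eq:app-heat-matrix-ode}. Hence $\|\mathcal P_s\|_{F\to F}\le 1$.

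There is no real obstacle here; the only point needing care is the differentiability of $s\mapsto Y_s$, which the explicit formula supplies. As a cross-check, one could instead use the eigenbasis of $L$. Writing $\widetilde Y = U^\top Y U$, we get $\widetilde Y_s{}_{ij} = e^{-s(\lambda_i+\lambda_j)}\widetilde Y_0{}_{ij}$. Each entry is nonincreasing in absolute value because $\lambda_i\ge 0$, and orthogonal invariance of $\|\cdot\|_F$ then gives the same conclusion directly.
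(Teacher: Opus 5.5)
Your proposal is correct and matches the paper's proof. The paper simply differentiates $\|Y_s\|_F^2$ along the trajectory and invokes Lemma~\ref{lem:dissipative-LYplusYL}; your extra remarks (regularity from the explicit formula, the difference argument giving $\|\mathcal P_s\|_{F\to F}\le 1$, and the eigenbasis cross-check) are sound elaborations, not a different route.
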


\begin{proof}
Differentiate $\|Y_s\|_F^2$ and apply Lemma~\ref{lem:dissipative-LYplusYL}.
\end{proof}

\begin{lemma}[Closed-form solution and semigroup contraction]\label{lem:semigroup-contraction}
Let $H_s := e^{-sL}$. The unique solution to \eqref{eq:app-heat-matrix-ode} is
\[
Y_s = H_s\,Y_0\,H_s,\qquad s\ge 0.
\]
Moreover, the maps $\mathcal P_s(Y):=H_s Y H_s$ form a strongly continuous semigroup on $\R^{n\times n}$ and satisfy the contraction bound
\[
\|\mathcal P_s(Y)\|_F \le \|Y\|_F,\qquad s\ge 0.
\]
\end{lemma}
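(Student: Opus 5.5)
The lemma has three parts: the closed form, the semigroup structure, and the contraction bound. I will treat them in that order and reuse the earlier results wherever I can.

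\emph{Closed form.} The first claim is exactly Proposition~\ref{prop:heat-solution}, so I will simply cite it. It shows that $Y_s = H_sY_0H_s$ solves \eqref{eq:app-heat-matrix-ode}, and that this solution is unique because the vectorised system is linear with constant coefficients on $\R^{n^2}$.

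\emph{Semigroup structure.} I will verify the semigroup axioms directly from the functional calculus of the symmetric matrix $L$. First, $\mathcal P_0(Y)=IYI=Y$. Next, $e^{-sL}$ and $e^{-tL}$ commute, so $H_sH_t=H_{s+t}$, which gives
\[
\mathcal P_s(\mathcal P_t(Y))=H_sH_tYH_tH_s=H_{s+t}YH_{s+t}=\mathcal P_{s+t}(Y).
\]
Strong continuity is straightforward in finite dimensions. The map $s\mapsto H_s$ is continuous in operator norm, since $H_s=Ue^{-s\Lambda}U^\top$ has entries that are finite sums of exponentials. It follows that
\[
\|\mathcal P_s(Y)-Y\|_F\le \|H_s-I\|_2\|Y\|_F\|H_s\|_2+\|Y\|_F\|H_s-I\|_2\to 0.
\]
In fact this gives uniform continuity, which is stronger than strong continuity.

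\emph{Contraction.} I will give two routes and use either one.
\begin{itemize}
\item \textbf{Via the earlier corollary.} The trajectory $s\mapsto \mathcal P_s(Y)$ solves \eqref{eq:app-heat-matrix-ode} with initial value $Y$, by the closed-form part. Corollary~\ref{cor:frob-stability} then gives $\|\mathcal P_s(Y)\|_F\le\|Y\|_F$ at once.
\item \textbf{Directly.} Since $\lambda_i\ge 0$, we have $\|H_s\|_2=\max_i e^{-s\lambda_i}\le 1$. Submultiplicativity, in the form $\|AB\|_F\le\|A\|_2\|B\|_F$ and $\|BA\|_F\le\|B\|_F\|A\|_2$, then yields
\[
\|H_sYH_s\|_F\le\|H_s\|_2^2\|Y\|_F\le\|Y\|_F.
\]
\end{itemize}

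I expect no real obstacle. The only point needing care is that the semigroup identity depends on $H_s$ and $H_t$ commuting, which holds because both are functions of the same matrix $L$.
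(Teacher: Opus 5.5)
Your proposal is correct and takes essentially the paper's route: the closed form comes from differentiating $H_sY_0H_s$ (the computation and uniqueness argument you cite from Proposition~\ref{prop:heat-solution}), and the contraction comes from $\|H_s\|_2\le 1$ together with $\|AB\|_F\le\|A\|_2\|B\|_F$. Your explicit check of the semigroup law and (uniform) continuity, which the paper's proof of this lemma omits and only supplies in Lemma~\ref{lem:semigroup}, is valid, and so is your alternative contraction argument via Corollary~\ref{cor:frob-stability}, which is not circular.
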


\begin{proof}
The representation $Y_s=H_sY_0H_s$ follows by differentiating $H_sY_0H_s$ and using $\frac{\mathrm d}{\mathrm ds}H_s=-LH_s=-H_sL$.
For contraction, since $L$ is symmetric with eigenvalues $\lambda_i\ge 0$, the eigenvalues of $H_s$ are $e^{-s\lambda_i}\in(0,1]$, hence $\|H_s\|_2\le 1$. 

Recall the submultiplicativity result for the Frobenius norm. For all matrices $A$ and $B$, $\|AB\|_F^2=\text{tr}(B^\top A^\top AB).$ Now since $\|A\|_2^2 I \succeq A^\top A,$ we obtain $\text{tr}(B^\top A^\top AB) \leq \|A\|_2^2\text{tr}(B^\top B)=\|A\|_2^2\|B\|_F^2.$ Taking square roots yields the result $\|AB\|_F \leq \|A\|_2\|B\|_F.$ Using this result, we therefore have,
\[
\|H_s Y H_s\|_F \le \|H_s\|_2\,\|YH_s\|_F \le \|H_s\|_2^2\,\|Y\|_F \le \|Y\|_F. \qedhere
\]
\end{proof}

\subsection{Semigroups and Infinitesimal Generators}
\label{app:semi-generators}

\begin{lemma}[Semigroup structure and infinitesimal generator]\label{lem:semigroup}
Let $\{H_s\}_{s\ge 0}$ be the graph heat kernel $H_s=e^{-sL}$ and define the family of linear operators
\[
P_s : \mathbb{R}^{n\times n}\to\mathbb{R}^{n\times n}, \qquad
P_s(Y) := H_s\, Y\, H_s .
\]
Then $\{P_s\}_{s\ge 0}$ forms a strongly continuous one-parameter semigroup on $\mathbb{R}^{n\times n}$ with $P_0=\mathrm{Id}$ and $P_{s+t}=P_s\circ P_t$. Its infinitesimal generator $\mathcal{A}$ is the linear operator
\[
\mathcal{A}(Y) = LY + YL,
\]
with domain $\mathrm{Dom}(\mathcal{A})=\mathbb{R}^{n\times n}$, and the symmetric heat equation
\[
\frac{\mathrm d}{\mathrm ds}Y_s = -\mathcal{A}(Y_s)
\]
is the abstract evolution equation associated with this semigroup.
\end{lemma}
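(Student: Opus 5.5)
The plan is to verify the semigroup axioms directly from the properties of the matrix exponential, and then identify the generator by computing a derivative at $s=0$. Since everything lives on the finite-dimensional space $\R^{n\times n}$, all topologies coincide and every linear map is bounded. As a result, no delicate domain issues arise.

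\textbf{Semigroup structure.} First I will check $P_0=\mathrm{Id}$, which holds because $H_0=e^{0}=I$. Next I will show $P_{s+t}=P_s\circ P_t$. Because $sL$ and $tL$ commute, $H_{s+t}=e^{-(s+t)L}=e^{-sL}e^{-tL}=H_sH_t=H_tH_s$. Hence
\[
P_{s+t}(Y)=H_sH_tYH_tH_s=P_s(P_t(Y)).
\]
For strong continuity I will bound
\[
\|P_s(Y)-Y\|_F\le \|H_s-I\|_2\,\|Y\|_F\,\|H_s\|_2+\|Y\|_F\,\|H_s-I\|_2 ,
\]
by writing $H_sYH_s-Y=(H_s-I)YH_s+Y(H_s-I)$. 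I will then use $\|AB\|_F\le\|A\|_2\|B\|_F$ (the transposed version follows the same way, as in the proof of Lemma~\ref{lem:semigroup-contraction}) together with $\|H_s\|_2\le 1$. Since $\|H_s-I\|_2=\max_i|e^{-s\lambda_i}-1|\to 0$ as $s\downarrow 0$, this bound gives continuity, and in fact uniform (norm) continuity.

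\textbf{Generator.} I will compute
\[
\lim_{h\downarrow0}\frac{P_h(Y)-Y}{h}.
\]
Using the decomposition above,
\[
\frac{P_h(Y)-Y}{h}=\frac{H_h-I}{h}\,Y H_h+Y\,\frac{H_h-I}{h}.
\]
Because $(H_h-I)/h\to -L$ in operator norm (the spectral decomposition $L=U\Lambda U^\top$ makes this the scalar limit $(e^{-h\lambda_i}-1)/h\to-\lambda_i$), and $H_h\to I$, the limit exists for every $Y$ and equals $-(LY+YL)$. Two points follow.

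\begin{enumerate}
\item The domain is all of $\R^{n\times n}$.
\item The generator, in the paper's sign convention where $\frac{\mathrm d}{\mathrm ds}Y_s=-\mathcal A(Y_s)$, is $\mathcal A(Y)=LY+YL$. Equivalently, $\mathcal G=-\mathcal A$ is the generator in the standard convention.
\end{enumerate}

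I will note this sign convention explicitly, since it is the only point of potential confusion.

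\textbf{Evolution equation.} Finally, I will observe that $Y_s=P_s(Y_0)$ satisfies $\frac{\mathrm d}{\mathrm ds}Y_s=-\mathcal A(Y_s)$. This either follows by applying the semigroup property, $\frac{P_{s+h}Y_0-P_sY_0}{h}=\frac{P_h-\mathrm{Id}}{h}(Y_s)$, or can be cited directly from Proposition~\ref{prop:heat-solution}, which also gives uniqueness.

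The main obstacle is essentially bookkeeping rather than mathematics: handling the sign convention of the generator consistently with the paper's statement. All limits are routine thanks to finite dimensionality and the spectral decomposition of $L$.
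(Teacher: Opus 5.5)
Your proposal is correct and follows essentially the same route as the paper: the semigroup law comes from $H_{s+t}=H_sH_t$, strong continuity from the norm-continuity of $s\mapsto H_s$, and the generator from the difference quotient via $H_h=I-hL+o(h)$, which your decomposition $\frac{H_h-I}{h}YH_h+Y\frac{H_h-I}{h}$ makes precise. Your explicit remark on the sign convention is a real improvement, since the paper's proof computes the limit $-(LY+YL)$ and then calls $LY+YL$ the generator without comment.
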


\begin{proof}
The semigroup property follows directly from $H_{s+t}=H_s H_t$:
\[
P_{s+t}(Y)=H_{s+t}YH_{s+t}=H_s(H_t Y H_t)H_s=(P_s\circ P_t)(Y),
\]
and $P_0(Y)=IYI=Y$. Strong continuity holds since $s\mapsto H_s$ is continuous in operator norm.

For the generator, compute for any $Y\in\mathbb{R}^{n\times n}$,
\[
\lim_{h\downarrow 0}\frac{P_h(Y)-Y}{h}
= \lim_{h\downarrow 0}\frac{H_h Y H_h - Y}{h}
= -LY-YL,
\]
using $H_h=I-hL+o(h)$. Thus $\mathcal{A}(Y)=LY+YL$.
\end{proof}

\subsection{Results for the Forward and Reverse Processes}
\label{app:forward-reverse}

\begin{lemma}
\label{lem:frob-norm}
For a connected graph with combinatorial Laplacian $L$, the Frobenius norm between the symmetric heat diffusion $Y_T = H_T Y_0 H_T$ and its forward process limit $\lim_{s \rightarrow \infty} Y_s = \Pi Y_0 \Pi$ satisfies
\[
    \left\| Y_T - \Pi Y_0 \Pi \right\|_F \le 2e^{-T \lambda_2} \| Y_0 \|_F,
\]
where $\lambda_2$ is the smallest non-zero eigenvalue of $L$.
\end{lemma}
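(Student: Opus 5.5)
We decompose the heat kernel along the spectral splitting of $L$. Since the graph is connected, $L=U\Lambda U^\top$ has eigenvalue $\lambda_1=0$ with eigenvector $\mathbf 1/\sqrt n$, and all other eigenvalues satisfy $\lambda_i\ge\lambda_2>0$. Hence
\[
H_T=\Pi+R_T,\qquad R_T:=\sum_{i\ge 2}e^{-T\lambda_i}u_iu_i^\top .
\]
Because $R_T$ is supported on the orthogonal complement of the constants, $\Pi R_T=R_T\Pi=0$, and $\|R_T\|_2=e^{-T\lambda_2}$ (spectral norm, the largest remaining eigenvalue).

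Expanding the product gives
\[
Y_T-\Pi Y_0\Pi=H_TY_0H_T-\Pi Y_0\Pi=\Pi Y_0R_T+R_TY_0\Pi+R_TY_0R_T .
\]
A cleaner grouping for the norm estimate is
\[
Y_T-\Pi Y_0\Pi=(H_T-\Pi)Y_0H_T+\Pi Y_0(H_T-\Pi)=R_TY_0H_T+\Pi Y_0R_T .
\]
We bound each term with the inequality $\|AB\|_F\le\|A\|_2\|B\|_F$ from the proof of Lemma~\ref{lem:semigroup-contraction}, together with its transposed version $\|BA\|_F\le\|B\|_F\|A\|_2$. We also use $\|H_T\|_2\le 1$ (Lemma~\ref{lem:semigroup-contraction}) and $\|\Pi\|_2=1$, since $\Pi$ is an orthogonal projector. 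This gives
\[
\|R_TY_0H_T\|_F\le e^{-T\lambda_2}\|Y_0\|_F,\qquad \|\Pi Y_0R_T\|_F\le e^{-T\lambda_2}\|Y_0\|_F .
\]
The triangle inequality then yields the claimed bound $2e^{-T\lambda_2}\|Y_0\|_F$.

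The only point needing care is justifying $\|H_T-\Pi\|_2=e^{-T\lambda_2}$. This follows from the orthogonal diagonalization, because $H_T-\Pi$ has eigenvalues $0$ and $e^{-T\lambda_i}$ for $i\ge2$. Connectivity is used exactly here: it guarantees $\lambda_2>0$ and that the zero eigenspace is spanned by $\mathbf 1$, so that $\Pi$ is the correct projector. A sharper three-term bound, namely $\|\Pi Y_0R_T\|_F+\|R_TY_0\Pi\|_F+\|R_TY_0R_T\|_F\le (2e^{-T\lambda_2}+e^{-2T\lambda_2})\|Y_0\|_F$, is unnecessary; the two-term grouping above already yields the constant $2$ directly.
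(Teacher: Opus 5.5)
Your proof is correct and follows the paper's argument: the same grouping $Y_T-\Pi Y_0\Pi=(H_T-\Pi)Y_0H_T+\Pi Y_0(H_T-\Pi)$, the same bound $\|ABC\|_F\le\|A\|_2\|B\|_F\|C\|_2$, and the same spectral facts $\|H_T\|_2=\|\Pi\|_2=1$ and $\|H_T-\Pi\|_2=e^{-T\lambda_2}$ (via simultaneous diagonalisation). One small slip in your closing aside: the three-term bound with constant $2+e^{-T\lambda_2}$ is weaker than $2$, not sharper, although using that $\Pi Y_0R_T$, $R_TY_0\Pi$ and $R_TY_0R_T$ are mutually Frobenius-orthogonal would in fact improve the constant to $1$.
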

\begin{proof}
Consider the decomposition
\begin{align*}
    \left\| Y_T - \Pi Y_0 \Pi \right\|_F
    &= \left\| H_T Y_0 H_T - \Pi Y_0 \Pi \right\|_F \\
    &\le \left\| H_T Y_0 H_T - \Pi Y_0 H_T \right\|_F + \left\| \Pi Y_0 H_T - \Pi Y_0 \Pi \right\|_F \\
    &= \left\| (H_T - \Pi) Y_0 H_T \right\|_F + \left\| \Pi Y_0 (H_T - \Pi) \right\|_F \\
    &\le \left\| H_T - \Pi \right\|_2 \left\| Y_0 \right\|_F \left\| H_T \right\|_2 + \left\| \Pi \right\|_2 \left\| Y_0 \right\|_F \left\| H_T - \Pi \right\|_2,
\end{align*}
using the property that, for arbitrary matrices, $\left\| ABC \right\|_F \le \left\| A \right\|_2 \left\| B \right\|_F \left\| C \right\|_2$.

$\Pi$ is the rank-1 projection onto constants $\frac{1}{n} \mathbf{1} \mathbf{1}^\top$ with eigenvalue 1, which implies that $\left\| \Pi \right\|_2 = 1$. $H_T = e^{-TL}$ has eigenvalues $e^{-T \lambda_i}$ which implies that $\left\| H_T \right\|_2 = 1$. Since $\mathbf{1}$ is an eigenvector of both $e^{-TL}$ and $\Pi$ with eigenvalue 1, $H_T - \Pi$ can be simultaneously diagonalised. Therefore $\left\| H_T - \Pi \right\|_2 = e^{-T \lambda_2}$ corresponding to the smallest non-zero eigenvalue of $L$. Substituting these equalities into the starting decomposition completes the proof.
\end{proof}

\begin{lemma}
\label{lem:frob-norm-rev}
Let $X_0 = \Pi + X_\perp$ be an initial state for the reverse-time ODE where $X_\perp$ has row sums and column sums equal to 0, namely $\mathbf{1}^\top X_\perp = \mathbf{0}$ and $X_\perp \mathbf{1} = \mathbf{0}$. Reversing the heat diffusion with the true generator $\mathcal{J}X = T(LX + XL)$ produces $X_1 = e^{TL} X_0 e^{TL}$ satisfying
\begin{equation*}
    \left\| X_1 \right\|_F \ge e^{2T \lambda_2} \| X_\perp \|_F.
\end{equation*}
\end{lemma}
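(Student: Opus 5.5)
The plan is to work in the eigenbasis of $L$ and split $X_1$ into a constant part and an orthogonal part, exactly as in the proof of Lemma~\ref{lem:frob-norm}. Write $L=U\Lambda U^\top$ with $U$ orthonormal and first column $\mathbf 1/\sqrt n$ (the graph is assumed connected, as in Lemma~\ref{lem:frob-norm}, so $\lambda_1=0$ is simple and $\lambda_i\ge\lambda_2>0$ for $i\ge 2$). Let $Q:=I-\Pi$ be the orthogonal projector onto $\mathbf 1^\perp$. Since $\mathbf 1$ is an eigenvector of $e^{TL}$ with eigenvalue $1$, we have $e^{TL}\Pi=\Pi e^{TL}=\Pi$. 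The hypotheses $\mathbf 1^\top X_\perp=\mathbf 0$ and $X_\perp\mathbf 1=\mathbf 0$ say precisely that $X_\perp=QX_\perp Q$.

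First, by linearity,
\[
X_1=e^{TL}\Pi e^{TL}+e^{TL}X_\perp e^{TL}=\Pi+e^{TL}X_\perp e^{TL}.
\]
The matrix $e^{TL}$ commutes with $Q$, so the second term equals $Q\,e^{TL}X_\perp e^{TL}Q$. This term is Frobenius-orthogonal to $\Pi$, because $\langle \Pi, QZQ\rangle_F=\mathrm{tr}(\Pi QZQ)=0$. By Pythagoras,
\[
\|X_1\|_F^2=\|\Pi\|_F^2+\|e^{TL}X_\perp e^{TL}\|_F^2\ge\|e^{TL}X_\perp e^{TL}\|_F^2 .
\]
Alternatively, one may simply drop the $\Pi$ term by this orthogonality.

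Second, I bound $\|e^{TL}X_\perp e^{TL}\|_F$ from below. I would pass to coordinates $\widetilde X:=U^\top X_\perp U$. The orthogonality conditions say the first row and first column of $\widetilde X$ vanish, and
\[
U^\top e^{TL}X_\perp e^{TL}U=e^{T\Lambda}\widetilde X e^{T\Lambda}
\]
has entries $e^{T(\lambda_i+\lambda_j)}\widetilde X_{ij}$. The only nonzero entries have $i,j\ge2$, so each multiplier is at least $e^{2T\lambda_2}$. Summing squares and using orthogonal invariance of $\|\cdot\|_F$ gives $\|e^{TL}X_\perp e^{TL}\|_F\ge e^{2T\lambda_2}\|\widetilde X\|_F=e^{2T\lambda_2}\|X_\perp\|_F$. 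Combining this with the first step yields the claim.

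The main obstacle is the lower bound in the second step. The operator-norm inequality used in Lemma~\ref{lem:frob-norm} gives only upper bounds, and $e^{TL}$ restricted to all of $\mathbb R^n$ has smallest singular value $1$, not $e^{T\lambda_2}$. The entrywise eigenbasis computation resolves this, provided one uses the double-sided orthogonality (both row and column sums zero) to kill the $i=1$ and $j=1$ entries. Equivalently, $\sigma_{\min}(e^{TL}|_{\mathbf 1^\perp})=e^{T\lambda_2}$ applied on each side. I would also state explicitly that connectedness is needed so that $\lambda_2>0$ is the correct constant.
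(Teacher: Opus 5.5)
Your proof is correct and follows the paper's argument essentially step for step. You split $X_1=\Pi+e^{TL}X_\perp e^{TL}$, drop the $\Pi$ term, and then work in the eigenbasis of $L$, where the vanishing first row and column of $U^\top X_\perp U$ leave only multipliers $e^{T(\lambda_i+\lambda_j)}\ge e^{2T\lambda_2}$. Your Pythagoras argument via $\Pi Q=0$ justifies discarding $\Pi$, which the paper only asserts, and your sum of \emph{squared} entries corrects a slip in the paper's displayed computation.
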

\begin{proof}
Observe that
\begin{align*}
    X_1
    &= e^{TL} (\Pi + X_\perp) e^{TL} = \Pi + e^{TL} X_\perp e^{TL},
\end{align*}
since $\mathbf{1}$ is an eigenvector of $e^{TL}$ with eigenvalue 1. Therefore, $\left\| X_1 \right\|_F \ge \left\| e^{TL} X_\perp e^{TL} \right\|_F$.

Let $L = U \Lambda U^\top$ be the eigendecomposition of $L$, meaning $e^{TL} = U e^{T\Lambda} U^\top$. Therefore,
\begin{equation*}
    \left\| e^{TL} X_\perp e^{TL} \right\|^2_F 
    = \left\| e^{T\Lambda} U^\top X_\perp U e^{T\Lambda} \right\|^2_F
    = \sum_{i,j} e^{2T(\lambda_i + \lambda_j)} (U^\top X_\perp U)_{ij}.
\end{equation*}
Since $\mathbf{1}$ is the eigenvalue of $L$ corresponding to $\lambda_1 = 0$, this implies that both $(U^\top X_\perp U)_{i1} = 0$ and $(U^\top X_\perp U)_{1j} = 0$ meaning the $\lambda_1$ terms do not contribute to the sum. For the remaining non-zero terms $\lambda_i + \lambda_j \ge 2 \lambda_2$, therefore
\begin{equation*}
    \left\| e^{TL} X_\perp e^{TL} \right\|^2_F
    \ge e^{4T\lambda_2} \sum_{i,j} (U^\top X_\perp U)_{ij} 
    = e^{4T\lambda_2} \| X_\perp \|^2_F.
\end{equation*}
Taking square roots completes the proof.
\end{proof}

\section{Datasets}
\label{appx:datasets}
For all datasets listed below, when running experiments as described in \cref{sec:behaviour}, or for choosing hyperparameters, we use random seeds in the range $10-20$. For model benchmarking we use seeds $1-5$.

\subsection{Synthetic}
\paragraph{SBM}
We generate random stochastic block model networks (SBMs) with consistent intra-community and inter-community edge probabilities given by $p_{\text{inter}}=0.05$ and $p_{\text{intra}}=0.3$ to allow for reasonable distinguishability of communities. The number of communities is selected at random from $\{2,3,4,5\}$. For general analysis we construct any number of such graphs with arbitrary numbers of nodes using random seeds to control reproducibility. For benchmarking, we retained the seeds $1-5$ and split sets of $N=200$, 200-node graphs into $80\%$ training data and $20\%$ test.

\paragraph{DCSBM}
Degree-corrected SBMs, where the general construction mirrors that of standard SBMs but with additional control over the degree distribution, are generally considered to be more reasonable models for community-structured networks. Our datasets of such graphs are constructed using the same setup as for the SBMs outlined above but with edge probabilities weighted by draws from a $\mathrm{Beta}(1,1)$ distribution.

\paragraph{Planar}
Alongside the stochastic block models described above, we also included planar graphs as an important case study in their own right and as a stark topologically different challenge. In this case we constructed planar graphs by sampling uniformly at random 2D points and connecting this via Delauney triangulation. As above, for general analysis we generated any number of $n$-node planar graphs, retaining specific datasets of $N=200$, $n=64$ with the same random seeds and train/test splits as above.

\subsection{Real-world}
For the purposes of benchmarking on real-world graph datasets, we make use of the popular TUDataset, introduced in \cite{morris2020tudataset}, via the PyTorch Geometric library due to its convenient interface.

\paragraph{Proteins}
Macromolecules represent a rich set of variable-size graphs that include as a subset planar networks. Graphs are constructed by connecting edges between molecular substructures if they neighbour one another along a sequence of amino acids or are one of three nearest-neighbours in space. The dataset was originally introduced as an example problem of identifying enzymes from a set of macromolecules.

\paragraph{Enzymes}
This dataset is much like the proteins dataset, only now all of the elements are enzymes (and the original task was to classify their type).

\paragraph{QM9}
Molecular data constitutes a significant example of real-world planar graphs and generating such graphs is a frequent objective in the community. We follow suit and include the QM9 \citep{wu2018moleculenet} dataset, consisting of small organic molecules ($n\leq9$), along with additional molecular attributes. Following the likes of SPECTRE and DeFoG, we randomly partition the data into training, validation and test sets of size 97,682, 20,026 and 13,123 respectively. At generation time, 512 samples are produced (from each model) to compare to the test set. Test metrics are computed over a maximum of 500 graphs from the test set, to reduce computation time.

\section{Evaluation}
\label{appx:evaluation}
To evaluate the performance of our generator matching model we use synthetic and real-world graph datasets. Having defined a training set of graphs, we follow the procedure described in \cref{sec:training} to train a neural generator and subsequently produce new graphs by sampling from a suitable limiting distribution (corresponding to the choice of generator) and reversing the diffusion process. 

Our goal is to create new sets of graphs whose properties closely resemble those from the training set. Due to the complex structure of networks, defining what those properties should be is in general non-trivial. Substantial effort has been devoted to defining such properties across various mathematical disciplines that make use of such data structures. Within the statistical and machine learning community, we highlight work such as: \cite{wills2020metrics}, in which the authors discuss various commonly used metrics and their relationship to topological features; \cite{thompson2022evaluation}, which introduces metrics specifically intended to aid in measuring the performance of graph-generative models; 
\cite{zheng2023gnnevaluator} which focuses on evaluating the performance of GNNs at node-classification tasks.

Following the approach of precursory generative graph models, we adopt the use of maximum mean discrepancy (MMD) as a way to numerically evaluate the difference between empirical distributions of characteristics. 

The basis of the MMD is in the embedding of probability measures in a reproducing kernel Hilbert space (RKHS). More precisely, given a probability measure $P$, continuous positive-definite real-valued kernel $k$ defined on a (separable) topological space $\mathcal{X}$ with corresponding RKHS $\mathcal{H}$, the kernel mean embedding of $P$ is given by $\mu_P:=\int k(\cdot,x)dP(x)$, provided $k$ is $P$-integrable. With this in mind, we define the MMD between measures $P$ and $Q$ as
\[\MMD(P,Q):=\norm{\mu_P-\mu_Q}_\mathcal{H}.\]
In practice, we usually approximate $\mu_P$ by the Monte Carlo estimate $\mu_{P_n}=\frac{1}{n}\sum_{i=1}^nk(\cdot, X_i)$ with the use of the kernel trick: \[\MMD^2(P,Q)=\E_{X,X'}{k(X,X')}+\E_{Y,Y'}{k(Y,Y')}-2\E_{X,Y}{k(X,Y)}\] where expectations are approximated with Monte Carlo estimates with $X,X'\sim P$ and $Y,Y'\sim Q$. In our case, each contribution to the Monte Carlo estimate corresponds to evaluating the kernel over pairwise empirical distributions of graph properties listed below, each computed from singular graphs.

We leave detailed analysis of this metric (and its estimators) to the likes of \cite{sriperumbudur2010hilbert} and \cite{tolstikhin2016minimax} but suffice to note that generally one should choose a \emph{universal} kernel so that $\mathcal{H}$ is \emph{characteristic} in the sense that $\MMD(P,Q)=0\iff P=Q$; i.e. $\MMD(\cdot,\cdot)$ is a valid metric. 

We use MMD to measure the similarity between generated sets of graphs and hold-out graphs (from the relevant dataset) on the following characteristics:
\begin{description}[labelindent=1cm]
    \item[Degree:] For a node $i$, the degree, $d_i$, denotes the number of edges connected to it.
    \item[Triangles:] $T_i$ represents the number of triangles for which $i$ is a vertex.
    \item[Clustering (coefficient):]  The clustering coefficient for node $i$ is given by $c_i=2T_i/d_i(d_i-1)$.
    \item[Orbit:] Specifically 4-node orbit counts: the number of occurrences of each orbit for graphlets of (upto) size 4.
    \item[Spectrum:] The eigenvalues, $\lambda_i$, of the (normalised) graph Laplacian define the spectrum for the purposes of MMD evaluation.
\end{description}

\section{Implementation Details}
\label{appx:implementation}
\subsection{Neural Network Architecture}
\label{appx:architecture}
Our neural generator is a simple 4-layer ReLU MLP with (in general) 4096 nodes per hidden layer and layer normalisation between layers 2-3 and 3-4. This network acts on a flattened form of the input state and outputs a flattened state that is reshaped into a square matrix. We use the Adam optimiser with an initial learning rate of $10^{-4}$ and a scheduler that reduces this rate by a factor of $0.99$ to a minimum of $10^{-9}$ when the loss stops decreasing. Where we have used data-specific parameter settings, these are given in \cref{tab:hyperparameters}.

[Redacted for anonymity]

\begin{table}[h!]
    \centering
    \begin{tabular}{llllll}
    \toprule
        & Parameter \\
        Dataset & batch size* & epochs & $\alpha$ & $T$ & $w$\\
        \midrule
        Planar & 256 & 200 & $0.1$ & 6 & 4096\\
        SBM & 256 & 20 & 1& 15 & 1024\\
        DCSBM & 8 & 100 & 1& 20 & 4096\\
    \bottomrule
    \end{tabular}
    \caption{Choices of hyperparameters for $G^3$. *: Batch size is the \textit{maximum} batch size; for sets of graphs of different sizes, batches are formed by selecting only graphs of the \textit{same} size, up to a maximum of \texttt{batch\_size}. }
    \label{tab:hyperparameters}
\end{table}

\subsection{Variable numbers of nodes}
When dealing with sets of graphs with different numbers of nodes we can choose to deal with these in different ways. The most na\"{i}ve approach is to simply pad adjacency matrices with zero rows and columns until all of them have $n=\max\{n_1,\dots,n_N\}$ ``nodes''. Using this method, the heat kernel must be similarly padded to ensure matching dimensions.

Clearly, this is both inefficient and undesirable. Whilst padding can be helpful for simplicity, it can be more effective to (additionally) mask these zeros during training by defining suitable modifications to the MLP architecture to account for this. Specifically, we define a \texttt{MaskedLinear} module as a drop in replacement for the standard PyTorch \texttt{Linear} module in our neural architecture. The functionality of this is to take as an argument the number of nodes for a given input and index only the portion of the weight matrix up to the corresponding size of the (flattened) input state (i.e. $\frac{1}{2}n(n-1)$). 

Where datasets include graphs with disconnected nodes, we train only on the largest connected subgraph.

\subsection{Architecture for Related Work}
\label{appx:architecture_related}
Due to hardware limitations, we were unable to run either the DeFoG model exactly as given in the configuration files supplied with it, or the SPECTRE model with the details given in the Readme file. Time constraints led us to reduce the number of epochs in the SPECTRE training from 6000 to 5000. Both time and memory constraints forced us to make the following changes to the DeFoG SBM configuration file (which was duplicated in the case of DCSBMs, there being no pre-specified configuration file in that instance):

\begin{table}[h]
    \centering
    \begin{tabular}{lll}
    \toprule
        & Original & Modified \\
        Parameter &  \\
        \midrule
        batch size & 32 & 8 \\
        epochs & 50000 & 40000 \\
        layers & 8 & 4 \\
        RRWP steps & 20 & 10 \\
    \bottomrule
    \end{tabular}
    \caption{Table of modified parameter choices for the DeFoG SBM config. file.}
    \label{tab:defog_sbm_config}
\end{table}

The configuration file for planar graphs was left unchanged for DeFoG. We noted that, in spite of different training parameters being provided in the SPECTRE Readme file for planar graphs (vs SBMs), we obtained better performance with the SBM settings than we did with the planar settings. For the Enzymes dataset, for which neither SPECTRE nor DeFoG has recorded configurations, we adopted very similar parameter settings to those provided for the Proteins dataset, due to the relative similarity between the graphs in these two datasets. The only differences are as follows: for SPECTRE, we reduced the maximum number of training epochs from 1020 to 400 due to time constraints; for DeFoG, although dataset infrastructure was inplace for Proteins, there was no configuration file. As such, we adopted the planar settings but with the epochs reduced from 100000 to 1000 for Proteins and 2000 for Enzymes.

\paragraph{DeFoG* and SPECTRE*}
In \cref{sec:performance} we introduced DeFoG* and SPECTRE* as computationally limited forms of each model. \cref{tab:defog*_spectre*} shows how we constrained relevant parameters to reduce computation time.

\begin{table}[h]
    \centering
    \begin{tabular}{llll}
    \toprule
          &  & Original & Modified \\
        Model & Data &  \\
        \midrule
        SPECTRE* & Proteins & 1020 & 10 \\
        SPECTRE* & Enzymes & 400 & 4 \\
        DeFoG* & SBM & 40000 & 1000 \\
        DeFoG* & DCSBM & 40000 & 1000 \\
        DeFoG* & Planar & 100000 & 1000 \\
    \bottomrule
    \end{tabular}
    \caption{Table of training epoch reductions for DeFoG* and SPECTRE* by dataset.}
    \label{tab:defog*_spectre*}
\end{table}

\section{Framework Extensions}
\label{appx:extensions}

\subsection{Asymmetric Heat Kernel}
\label{appx:asymmetric}
In \cref{sec:background} we consider the heat equation on a graph and define a \textit{symmetric heat diffusion} that applies the heat kernel to the left and right of some graph matrix representation. We go on to outline the behaviour of this process and how it fits into the generator matching methodology. 

In this section, we consider the more obvious \textit{asymmetric} heat diffusion, where $Y_s:=H_sY_0$ (using the same notation as in \cref{sec:background}) and provide an abridged summary of the corresponding quantities and modifications to $G^3$, primarily through \cref{tab:symm_v_asymm}. Many of the asymmetric quantities are intuitive, however the choice of $X_0$, which depends on the choice of matrix representation $Y_0$, requires further explanation. For example, in the case of $Y_0=A$, the standard adjacency matrix, we have a limit matrix of $\frac{1}{n}\1\bm{d}^\top$, where $\bm{d}$ is the vector of degrees of $A$. From this, we could define a base distribution with elements $X_0:=\bm{\tilde{x}}_0\bm{v}_0^\top$, where $\bm{\tilde{x}}_0\sim\mathrm{Dirichlet}(\alpha\1)$ and $\log\bm{v}_0\sim\mc{N}(\hat{\mu}_d,\hat{\sigma}_d^2)$ with $\hat{\mu}_d,\hat{\sigma}_d^2$ the logarithm of the mean and variance of the scaled degree vectors from the training set, i.e. $\hat{\mu}_d:=\log(n^{-1}\avg{d})$. On the other hand, choosing $Y_0=D^{-1}A$ (the random walk matrix) leads to a limit matrix of $n^{-1}\1\1^\top$ and the natural choice of a pure Dirichlet base distribution, much like that described in the main text only without the symmetrisation and scaling.

\begin{table}[h]
    \centering
    \begin{tabular}{ccc}
    \toprule
      Quantity   & Symmetric & Asymmetric\\
    \midrule
        $\mc{P}_s(Y)$ & $H_sY H_s$ & $H_sY$\\
        $\ddt Y_t$ & $-LY_t-Y_tL$ & $-LY_t$ \\
        $\lim_{t\rightarrow\infty}Y_t$ & $\tfrac1n(\1^\top Y_0\1)\1\1^\top$ & $\tfrac1n\1\1^\top Y_0$\\
        $X_0(A)$ & $\avg{d}(\tilde{X}_0 + \tilde{X}_0^\top)$ & $\bm{\tilde{x}}_0\bm{v}_0^\top$ \\
    \bottomrule
    \end{tabular}
    \caption{Definitions of key quantities under the symmetric and asymmetric heat kernel processes.}
    \label{tab:symm_v_asymm}
\end{table}

\subsection{Alternate Graph Representations and Heat Kernels}
\label{appx:method}

In \cref{sec:training}--\ref{sec:sampling} we describe the general scheme for heat kernel generator matching and sampling. To build a practical implementation of this scheme the user still has various choices to make. The most salient of these is the choice of matrix-representation of the graph, as this dictates a corresponding base distribution. As described in the preceding section, under symmetric and asymmetric formulations, a choice of $Y_0=A$ leads to different samples $X_0$. Other representations are possible, such as the random walk matrix $D^{-1}A$ or the symmetrised form $D^{-\tfrac12}AD^{-\tfrac12}$.


For simplicity, the main text focusses on the heat kernel formulation using the combinatorial Laplacian. However, it is straightforward to generalise the approach to other kernels, provided they satisfy suitable stochasticity properties. One alternative is the \textit{random walk kernel}, which is shown to be a valid choice in \cref{lem:rw_validity}. If such an alternative is used, suitable limiting matrices and base distributions must be computed. In \cref{tab:kernel_val_lim} we consider replacing the combinatorial Laplacian with the random walk and normalised Laplacians, $L_\text{rw}=I-D^{-1}A$ and $L_N=I-D^{-\tfrac12}AD^{-\tfrac12}$ as well as the random walk and lazy random walk kernels. 

\begin{lemma}
\label{lem:rw_validity}
    The (lazy) random walk kernel is a valid probability transition kernel.
\end{lemma}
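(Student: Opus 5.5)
We first fix what the objects in Lemma~\ref{lem:rw_validity} are. The plain random walk matrix is $P := D^{-1}A$. The lazy random walk matrix is $P_{\mathrm{lazy}} := \tfrac12(I + D^{-1}A)$. We also treat the continuous-time kernel $K_s := e^{-sL_\text{rw}}$, $s\ge 0$, with $L_\text{rw}=I-D^{-1}A$. \emph{Valid probability transition kernel} will mean a matrix that is entrywise nonnegative and row-stochastic, i.e. $K\mathbf{1}=\mathbf{1}$. Throughout we assume every node has $d_i>0$. This is consistent with training only on the largest connected subgraph, as in \cref{appx:implementation}, so $D^{-1}$ exists.

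\textbf{Discrete kernels.} For $P=D^{-1}A$, nonnegativity is immediate from $A_{ij}\in\{0,1\}$ and $d_i>0$. Row sums are $(D^{-1}A\mathbf{1})_i = d_i^{-1}\sum_j A_{ij} = 1$. The lazy kernel $\tfrac12(I+P)$ is a convex combination of two stochastic matrices, so it is stochastic: nonnegativity is preserved and $\tfrac12(I+P)\mathbf{1}=\tfrac12(\mathbf{1}+\mathbf{1})=\mathbf{1}$. It follows that all powers $P^k$ and $P_{\mathrm{lazy}}^k$ are stochastic as well, since products of stochastic matrices are stochastic.

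\textbf{Continuous-time kernel.} This is the step needing care. Row sums follow from $L_\text{rw}\mathbf{1}=\mathbf{1}-D^{-1}A\mathbf{1}=0$. Hence $e^{-sL_\text{rw}}\mathbf{1}=\sum_k \frac{(-s)^k}{k!}L_\text{rw}^k\mathbf{1}=\mathbf{1}$. For nonnegativity the power series of $e^{-sL_\text{rw}}$ has alternating signs, so we instead split off the identity. Since $I$ commutes with $P$,
\[
e^{-sL_\text{rw}} = e^{-s(I-P)} = e^{-s}\,e^{sP} = e^{-s}\sum_{k\ge0}\frac{s^k}{k!}P^k .
\]
This is a series of entrywise nonnegative matrices with nonnegative coefficients, so it is nonnegative. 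The same series exhibits $K_s$ as a Poisson mixture of the stochastic matrices $P^k$, since $e^{-s}\sum_k s^k/k!=1$, which gives stochasticity again.

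If instead the lemma is read with the lazy walk $e^{-sL_\text{lazy}}$, where $L_\text{lazy}=\tfrac12 L_\text{rw}$, the identical argument applies with $P$ replaced by $P_{\mathrm{lazy}}$ and $s$ replaced by $s/2$. Finally, we can add the semigroup property $K_{s+t}=K_sK_t$. Together with Chapman--Kolmogorov, this shows $(K_s)$ is a Markov transition semigroup. This matches the interpretation cited from \citet{chung1997spectral}.
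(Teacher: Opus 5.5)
Your proof is correct, but it takes a different route from the paper's.

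The paper argues spectrally. It writes $D^{-1}A = Q\Lambda Q^{-1}$ with first column $\mathbf{q}_1 = c\mathbf{1}$ (eigenvalue $1$), deduces $Q^{-1}\mathbf{1} = c^{-1}\mathbf{e}_1$, and concludes $Q\Lambda^t Q^{-1}\mathbf{1} = \mathbf{1}$. The lazy case is dismissed as ``the same argument''. This shows only that the $t$-step kernel has unit row sums. Nonnegativity is never checked, and diagonalisability of $D^{-1}A$ is used without comment (it holds because $D^{-1}A$ is similar to $D^{-1/2}AD^{-1/2}$). The appeal to ``eigenvector orthogonality'' is also unnecessary: $Q$ is not orthogonal here, and $Q^{-1}\mathbf{1} = c^{-1}\mathbf{e}_1$ follows simply from $Q\mathbf{e}_1 = c\mathbf{1}$.

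Your approach needs no spectral input. You check $P \ge 0$ and $P\mathbf{1} = \mathbf{1}$ directly, then use closure of stochastic matrices under products and convex combinations. This gives both nonnegativity and unit row sums for $(D^{-1}A)^t$ and $2^{-t}(I + D^{-1}A)^t$. Your Poisson-mixture identity $e^{-sL_{\mathrm{rw}}} = e^{-s}\sum_k \tfrac{s^k}{k!}P^k$ also proves the $e^{-tL_{\mathrm{rw}}}$ entry of the paper's kernel table, which the paper asserts without proof. What the spectral set-up buys the paper is reuse: the same decomposition immediately gives its limit $(D^{-1}A)^t \to QJ^{11}Q^{-1}$.

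One small slip: for $e^{-sL_{\mathrm{lazy}}}$ you should make \emph{either} substitution, not both. Indeed $e^{-sL_{\mathrm{lazy}}} = e^{-s}e^{sP_{\mathrm{lazy}}} = e^{-s/2}e^{(s/2)P}$, whereas doing both gives $e^{-(s/4)L_{\mathrm{rw}}}$. The conclusion is unaffected.
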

\begin{proof}
    Let $Q\Lambda Q^{-1}$ be the eigendecomposition of $D^{-1}A$. We have that $D^{-1}A\1=\1$, hence $\1$ is an eigenvector with eigenvalue 1. By eigenvector orthogonality we know that $Q^{-1}\1=c^{-1}\bm{e}_1$. Thus, $Q\Lambda^t Q^{-1}\1=Q\Lambda^tc^{-1}\bm{e}_1=Q\lambda_1^tc^{-1}\bm{e}_1=\bm{q}_1/c=\1$ since $\lambda_1=1$ and we have ordered the eigenvalues such that $\bm{q}_1=Q_{:1}=c\1$ is the first column vector of $Q$. The same argument applies in the case of the lazy random walk operator, $\half(I+D^{-1}A)$.
\end{proof}

The limiting form of the random walk matrix is given by $(D^{-1}A)^t\xrightarrow{t\rightarrow\infty}QJ^{11}Q^{-1}$, where $J^{ij}$ is the matrix of zeros with a single 1 at the $ij$th entry. This simplifies to the rank-1 matrix $\bm{q}_1\bm{q}^{-\top}_1$, where $\bm{q}_1^{-\top}$ is the 1st row of $Q^{-1}$ which further simplifies to $c\1\bm{q}_1^{-\top}$, since the first eigenvector is a constant multiple of the vector of ones. The same argument applies to the lazy random walk matrix. In practice this quantity tends to be close to the heat limit anyway; the perturbation comes from the lack of symmetry of the random walk matrix, which can be quantified by the commutator $[D^{-1},A]=D^{-1}A-AD^{-1}$.

\begin{table}[h]
    \centering
    \begin{tabular}{ccc}
    \toprule
        Kernel & Valid & Limit\\
    \midrule
        $\e^{-tL}$ & \checkmark & $n^{-1}\1\1^\top$\\
        $\e^{-tL_\text{rw}}$ & \checkmark & $n^{-1}\1\1^\top$ \\
        $\e^{-tL_N}$ & \xmark  & $n^{-1}\1\1^\top$\\
        $(D^{-1}A)^t$ & \checkmark & $q_{11}\1\bm{q}_1^{-\top}$\\
        $\frac{1}{2^t}(I+D^{-1}A)^t$ & \checkmark & $q_{11}\1\bm{q}_1^{-\top}$\\
    \bottomrule
    \end{tabular}
    \caption{Validity of probabilistic interpretation and limiting behaviour for different kernels.}
    \label{tab:kernel_val_lim}
\end{table}

Further, any such kernel can be symmetrised to act on both the right and left of the subject matrix as is described in the main text for the heat kernel.

\subsection{Conditional Graph Generation}
\label{appx:conditional_generation}

Another choice of matrix representation leads to a bonus formulation that provides a natural way to include covariate information for the purposes of conditional generation (often known as ``guidance'' in the generative literature). Covariate-assisted spectral clustering \cite{binkiewicz2017covariate} uses node-level covariate information to aid spectral-based clustering techniques. The key to the approach is to augment the Laplacian in the following way: $L_{\omega}:= L+\omega ZZ^\top$ for some tuning parameter $\omega$ and covariates $Z$. We adopt this augmentation for our matrix representation ($Y_0$) and observe that, for vector-valued $Z$ (denoted $\bm{z}$), $\lim_{t\rightarrow\infty}H_tL_{\omega}=n^{-1}\omega\1(\1^\top\bm{z})\bm{z}^\top$; in other words, the limiting distribution depends \textit{explicitly} on the node covariates. This leads to a natural way to implement graph generation conditioned on pre-specified covariate information by constructing suitable limit matrices analogously to those described in \cref{appx:method}.

In \cref{fig:sbm_guidance_example}, we demonstrate evidence of this approach in action on a toy example of SBM generation. In each column of the figure we generate different numbers of communities, with pre-chosen membership ratios: this information is encoded directly into vectors $(\bm{z}_i)_{i=1}^{N^*}$ for $N^*$ new graphs and used to construct matrices of the form $X_0=n^{-1}\hat{\omega}(\bm{\tilde{x}}_0^\top\bm{z})\bm{\tilde{x}}_0\bm{z}^\top$ for $\bm{\tilde{x}}_0\sim\mathrm{Dirichlet}(\alpha\1)$.

\begin{figure}
    \centering
    \includegraphics[scale=0.6]{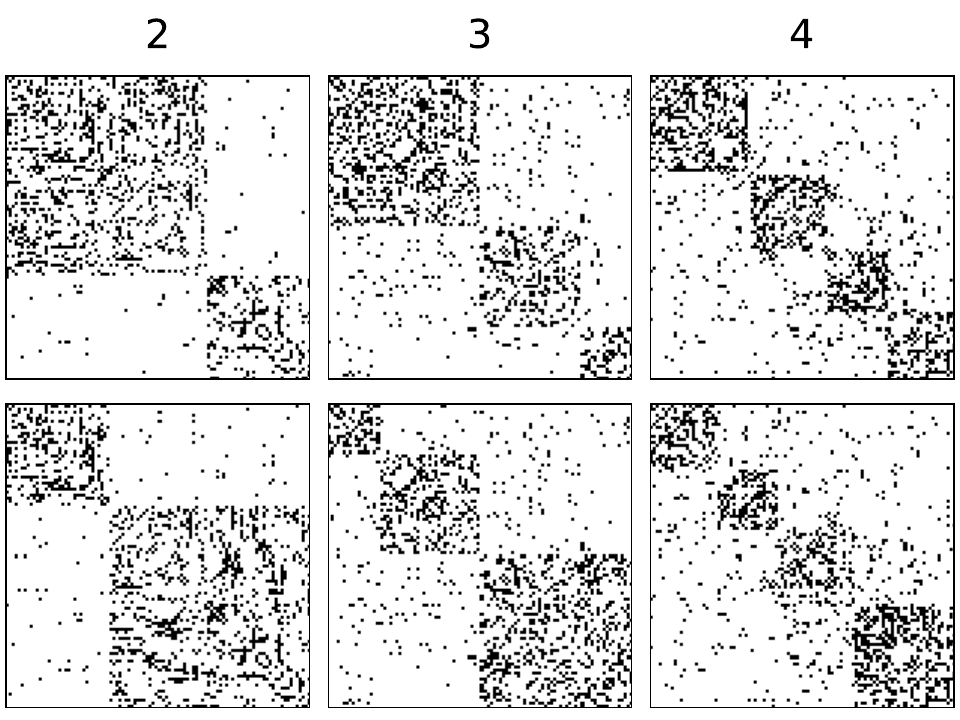}
    \caption{SBM adjacency matrices generated by pre-specifying node labels to the conditional form of the $G^3$ model. Columns headers indicate the number of communities and the top row uses a given ratio of membership sizes with the bottom row using the sizes reversed in order.}
    \label{fig:sbm_guidance_example}
\end{figure}

\section{Additional Experimental Results}
\label{appx:experiments}
In this section we include figures indicating the behaviour of $G^3$ with respect to training (and sampling) variables, including the number of graphs, number of nodes and choice of $\alpha$ parameter for the base distribution. 

\cref{fig:spec_v_tmax_by_n_planar} shows spectral MMD as a function of $T$ and the number of nodes in the graphs. Perhaps surprisingly, the apparent minima of the curves seems to occur at approximately the same $T$ which perhaps indicates that $N$ is the more important factor. The curves also appear to suggest that the problem gets ``harder'' as $n$ increases, at least for the planar graphs shown here.

\cref{fig:non-unique_v_tmax_by_N_planar} supplements \cref{fig:spec_v_tmax_by_N_planar} and \cref{fig:spec_v_tmax_by_n_planar} by showing how the proportion of non-unique output graphs depends on $n$ and $N$ (and $T$). Roughly, the spectral performance and non-uniqueness have some negative correlation, indicating that care needs to be taken to optimise overall quality of graph output.

\begin{table*}\footnotesize
\centering
\caption{Median (± median abs. deviation) of MMD metrics for $G^3$, DeFoG*, SPECTRE* over $3-5$ randomly seeded splits. Lower is better and \textbf{bold} indicates best performance, based on the median plus one median abs.~deviation. Training-test set comparisons are also included to show best possible performance.}
\label{tab:short_models_train_mmd}
\begin{tabularx}{0.95\linewidth}{llccccc}
\toprule
Dataset & Model & Clustering & Degree & Orbit & Spectrum & Triangles \\
\midrule
\multirow[c]{3}{*}{SBM}
    & $G^3$        & \bfseries 0.0356 ± 0.0005 & \bfseries 0.0318 ± 0.0052 & 0.0631 ± 0.0120 & 0.0796 ± 0.0120 & \bfseries 0.0270 ± 0.0078 \\
    & Asymm.~$G^3$ & 0.0398 ± 0.0010 & 0.0390 ± 0.0052 & 0.0710 ± 0.0140 & \bfseries 0.0832 ± 0.0015 & 0.0287 ± 0.0084 \\
    & DeFoG*       & 0.0513 ± 0.0005 & 0.6270 ± 0.0027 & \bfseries 0.0500 ± 0.0000 & 1.6600 ± 0.0003 & 0.0818 ± 0.0030 \\
    & Training     & 0.0312 ± 0.0000 & 0.0028 ± 0.0013 & 0.0312 ± 0.0000 & 0.0055 ± 0.0004 & 0.0017 ± 0.0002 \\
\midrule
\multirow[c]{3}{*}{DCSBM}
    & $G^3$        & 0.1270 ± 0.0091 & \bfseries 0.0344 ± 0.0062 & 0.0971 ± 0.0057 & 0.1230 ± 0.0330 & \bfseries 0.0110 ± 0.0098 \\
    & Asymm.~$G^3$ & 0.2340 ± 0.0160 & 0.0810 ± 0.0200 & 0.1420 ± 0.0170 & \bfseries 0.1010 ± 0.0220 & 0.0578 ± 0.0250 \\
    & DeFoG*       & \bfseries 0.1110 ± 0.0056 & 0.7370 ± 0.0015 & \bfseries 0.0665 ± 0.0003 & 1.2600 ± 0.0041 & 0.4330 ± 0.0210 \\
    & Training     & 0.0291 ± 0.0009 & 0.0017 ± 0.0007 & 0.0286 ± 0.0025 & 0.0277 ± 0.0069 & 0.0012 ± 0.0002 \\
\midrule
\multirow[c]{4}{*}{Planar}
    & $G^3$        & 0.3090 ± 0.0300 & \bfseries 0.0030 ± 0.0015 & \bfseries 0.0035 ± 0.0018 & \bfseries 0.0060 ± 0.0015 & 0.0467 ± 0.0360 \\
    & Asymm.~$G^3$ & 1.0200 ± 0.0440 & 0.0074 ± 0.0050 & 0.0106 ± 0.0089 & 0.0113 ± 0.0021 & 0.1110 ± 0.0320 \\
    & DeFoG*       & \bfseries 0.1200 ± 0.0043 & 0.7350 ± 0.0010 & 1.0100 ± 0.0014 & 1.9500 ± 0.0160 & \bfseries 0.0296 ± 0.0005 \\
    & Training     & 0.0224 ± 0.0021 & 0.0004 ± 0.0000 & 0.0002 ± 0.0001 & 0.0004 ± 0.0002 & 0.0050 ± 0.0016 \\
\midrule
\multirow[c]{3}{*}{Enzymes}
    & $G^3$        & \bfseries 0.0225 ± 0.0008 &  \bfseries 0.0854 ± 0.0550 & 0.0902 ± 0.0390 & \bfseries 0.0755 ± 0.0220 & \bfseries 0.0490 ± 0.0380 \\
    & Asymm.~$G^3$ & 0.0803 ± 0.0200 & 0.0934 ± 0.0520 & \bfseries 0.1040 ± 0.0047 & 0.1090 ± 0.0480 & 0.0752 ± 0.0120 \\
    & SPECTRE*     & 0.0598 ± 0.0110 & 0.7290 ± 0.0043 & 0.6140 ± 0.0230 & 1.1700 ± 0.0590 & 0.4700 ± 0.0096 \\
    & Training     & 0.0122 ± 0.0002 & 0.0004 ± 0.0004 & 0.0024 ± 0.0014 & 0.0114 ± 0.0024 & 0.0021 ± 0.0001 \\
\midrule
\multirow[c]{3}{*}{Proteins}
    & $G^3$        & \bfseries 0.0465 ± 0.0009 & \bfseries 0.0220 ± 0.0012 &  0.5530 ± 0.0720 & 0.1600 ± 0.0074 &  0.0928 ± 0.0180 \\
    & Asymm.~$G^3$ &  0.0750 ± 0.0073 &  0.0345 ± 0.0003 & \bfseries 0.3480 ± 0.2000 & \bfseries 0.0921 ± 0.0110 & \bfseries 0.0419 ± 0.0071 \\
    & SPECTRE*     & 0.1070 ± 0.0000 & 0.3600 ± 0.0650 & 0.7820 ± 0.0250 & 0.3750 ± 0.0044 & 0.0358 ± 0.0240 \\
    & Training     & 0.0504 ± 0.0001 & 0.0012 ± 0.0001 & 0.0093 ± 0.0010 & 0.0568 ± 0.0003 & 0.0011 ± 0.0003 \\
\bottomrule
\end{tabularx}
\end{table*}

\begin{figure}
    \centering
    \begin{subfigure}[h]{\FigureScale\textwidth}
        \includegraphics[width=\linewidth]{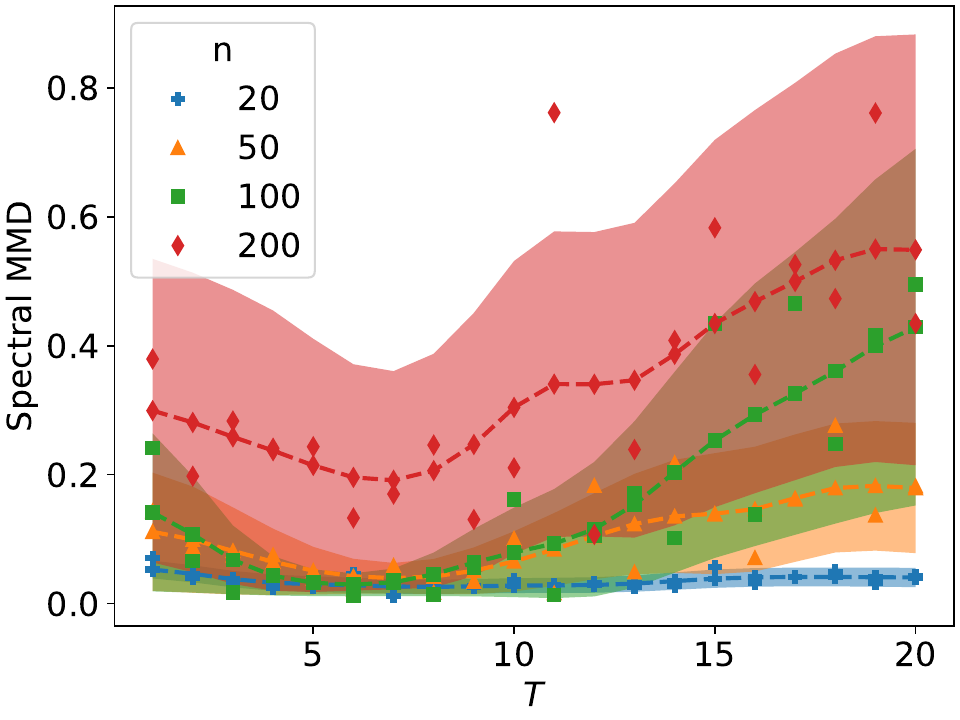}
        \caption{}\label{fig:spec_v_tmax_by_n_planar}
    \end{subfigure}
    \begin{subfigure}[h]{\FigureScale\textwidth}
        \includegraphics[width=\linewidth]{figures/spec_v_tmax_by_n_sbm.pdf}
        \caption{}\label{fig:spec-n-sbm}
    \end{subfigure}
    \caption{Spectral MMD as a function of $T$ and number of nodes, $n$, for ($N=100$) \subref{fig:spec_v_tmax_by_n_planar} planar graphs and \subref{fig:spec-n-sbm} SBMs. \SmoothRibbonText }
    \label{fig:spec_v_tmax_by_n}
\end{figure}

\begin{figure}
    \centering
    \begin{subfigure}[h]{0.48\textwidth}
            \centering
            \begin{tabular}{llll}
            \toprule
             &  &  & spec \\
            $N$ & $T$ & $w$ &   \\
            \midrule
            \multirow[c]{3}{*}{100} & \multirow[c]{3}{*}{15} & 1024 & 0.020 $\pm$ 0.019 \\
             &  & 2048 & 0.027 $\pm$ 0.022 \\
             &  & 4096 & 0.041 $\pm$ 0.043 \\
            \cline{1-4} \cline{2-4}
            \bottomrule
            \end{tabular}
           \caption{}\label{tab:sbm_w}
    \end{subfigure}
    \begin{subfigure}[h]{0.48\textwidth}
        \includegraphics[scale=\FigureScale]{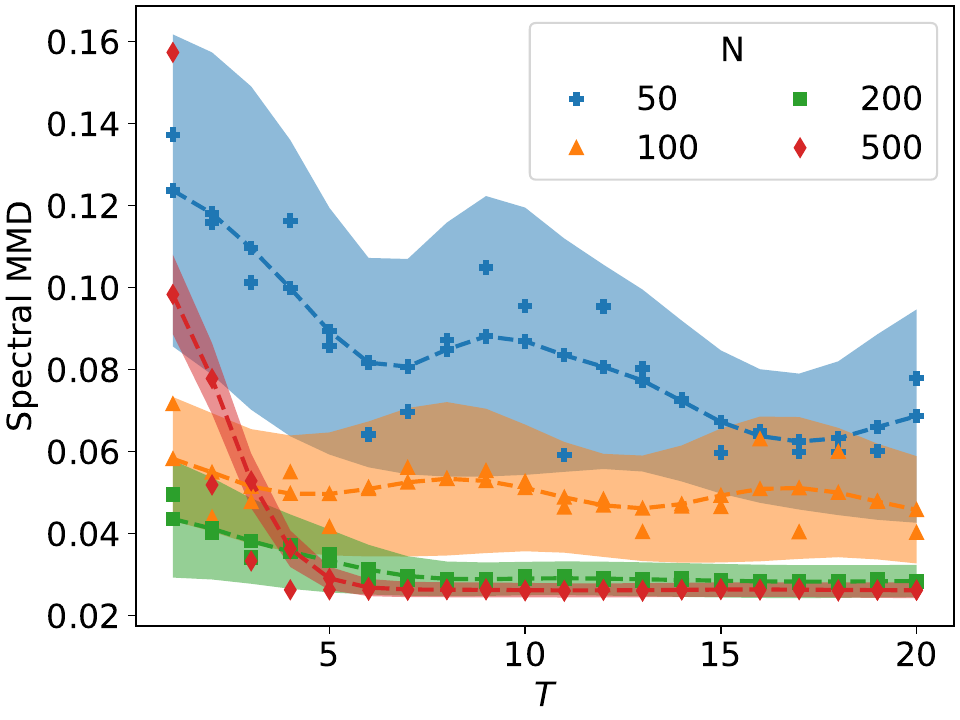}
            \caption{}\label{fig:spec-N-sbm}
    \end{subfigure}
    \caption{\subref{tab:sbm_w} Spectral MMD for different choices of $w$, the number of hidden nodes per layer, for fixed $T$ and $N$ (for SBMs). \subref{fig:spec-N-sbm}: Spectral MMD as a function of $T$ and number of graphs $N$ for SBMs. \SmoothRibbonText }
    \label{fig:spec_v_tmax_by_N_sbm}
\end{figure}

\begin{figure}
    \centering
    \begin{subfigure}[h]{\FigureScale\textwidth}
        \includegraphics[width=\linewidth]{figures/non-unique_v_tmax_by_N_planar.pdf}
        \caption{}\label{fig:non-unique-N}
    \end{subfigure}
    \begin{subfigure}[h]{\FigureScale\textwidth}
        \includegraphics[width=\linewidth]{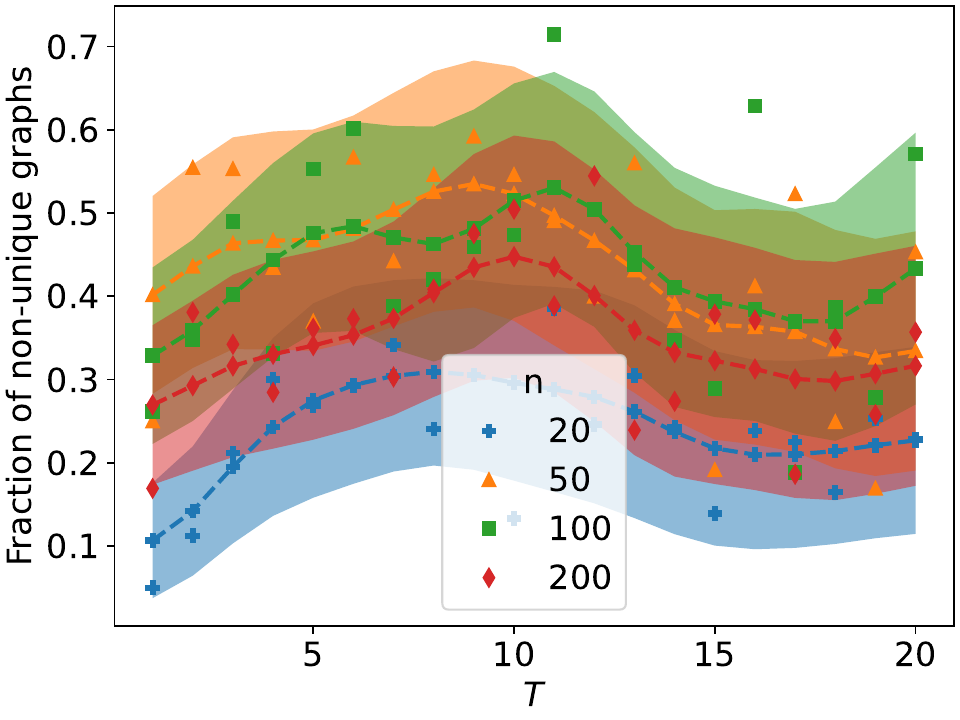}
        \caption{}\label{fig:non-unique-n}
    \end{subfigure}
    \caption{Proportion of non-unique generated graphs, as a function of $T$. \subref{fig:non-unique-N}: Planar graphs ($n=64$) for different sizes of dataset, $N$. \subref{fig:non-unique-n}: Planar graphs ($N=100$) for different sized graphs. \SmoothRibbonText }
    \label{fig:non-unique_v_tmax_by_N_planar}
\end{figure}

\begin{figure}
    \centering
    \includegraphics[scale=\FigureScale]{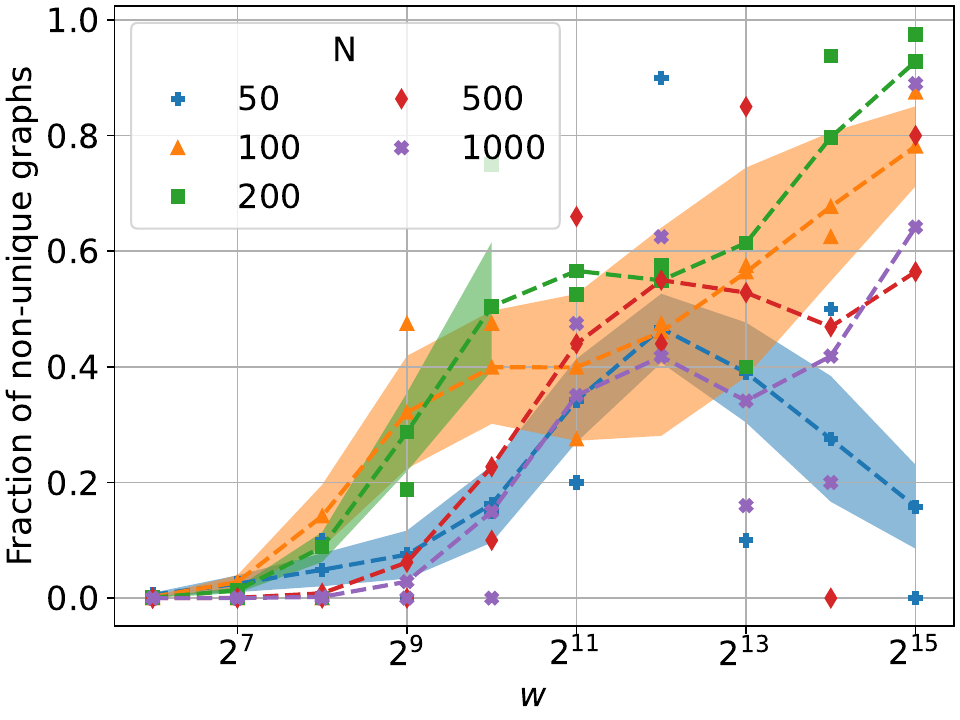}
    \caption{Proportion of non-unique generated planar graphs as a function of $w$, the number of nodes per hidden layer, and number of graphs $N$. \SmoothRibbonText}
    \label{fig:non-unique_v_w_by_N_planar}
\end{figure}

\begin{figure}
    \centering
    \begin{subfigure}[h]{\FigureScale\textwidth}
        \includegraphics[width=\linewidth]{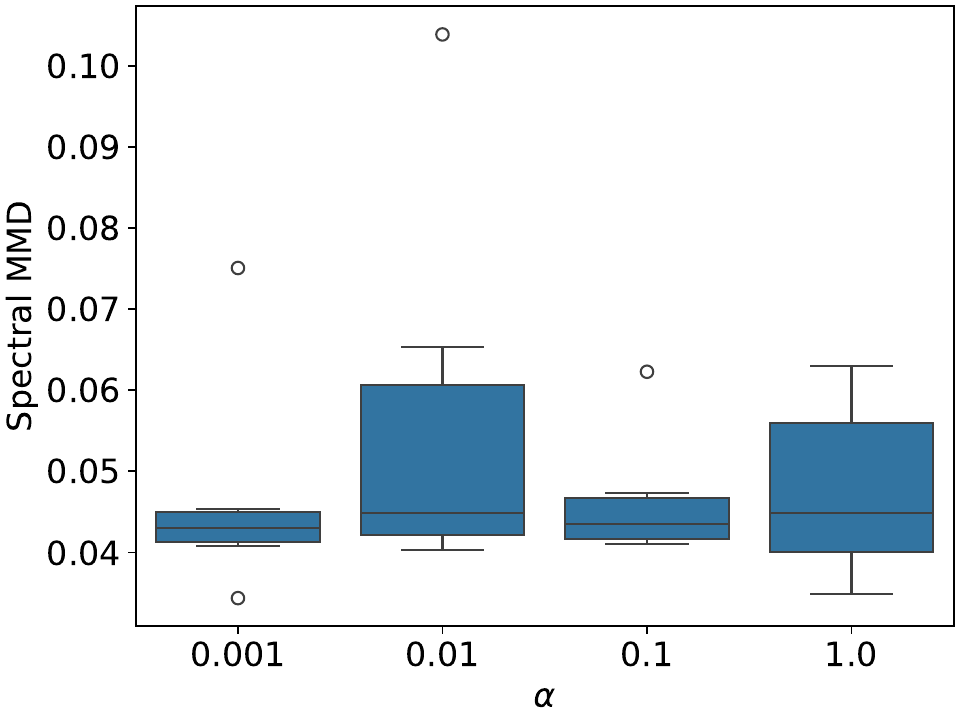}
        \caption{}\label{fig:alpha_sbm}
    \end{subfigure}
        \begin{subfigure}[h]{\FigureScale\textwidth}
        \includegraphics[width=\linewidth]{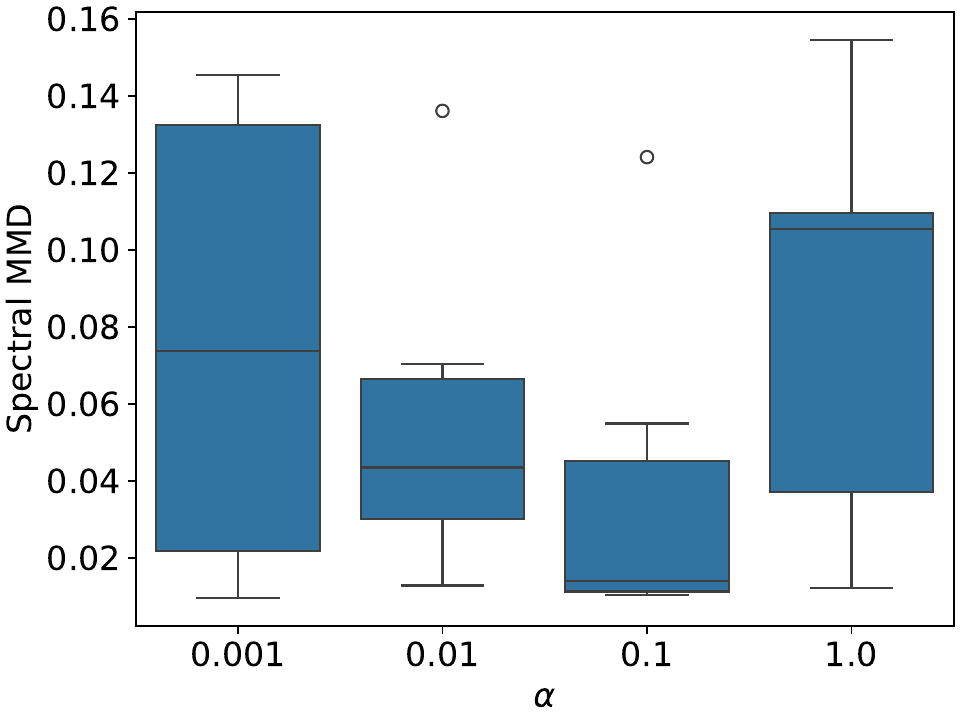}
        \caption{}\label{fig:alpha_planar}
    \end{subfigure}
     \caption{Spectral MMD as a function of $\alpha$, the Dirichlet concentration parameter for the base distribution. \subref{fig:alpha_sbm}: results for 200-node SBM graphs; \subref{fig:alpha_planar}: results for 64-node planar graphs.}
    \label{fig:spec_v_alpha}
\end{figure}

\begin{figure}
    \centering
     \includegraphics[scale=\FigureScale]{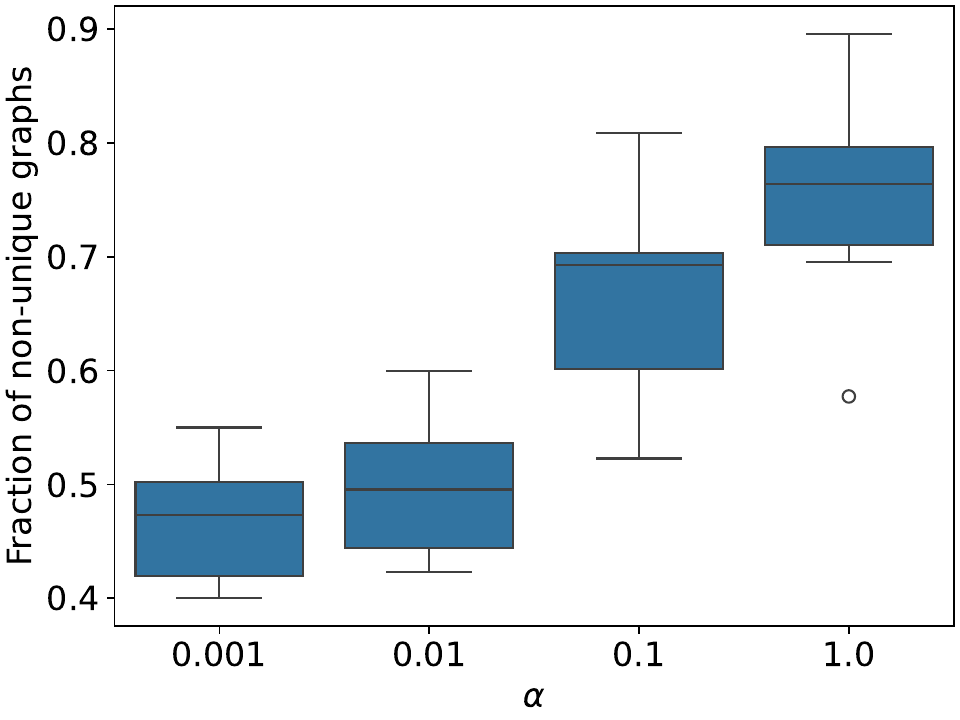}
     \caption{Fraction of (planar) graphs generated that are non-unique as a function of $\alpha$.}
    \label{fig:nonu_v_alpha}
\end{figure}



\begin{figure}
    \centering
    \includegraphics[scale=\FigureScale]{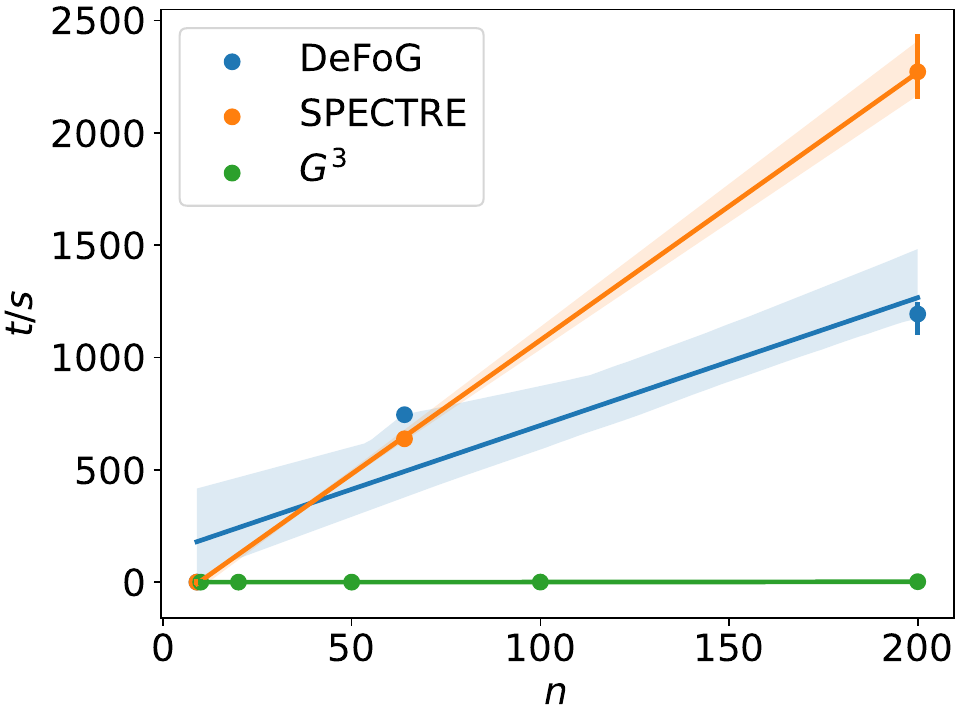}
    \caption{Training plus generation time per sample graph for DeFoG, SPECTRE, and $G^3$.}
    \label{fig:train_times_v_n}
\end{figure}

\def\ExampleFigScale{0.49}
\begin{figure}
    \centering
        \begin{subfigure}[h]{\ExampleFigScale\textwidth}
        \includegraphics[width=\linewidth]{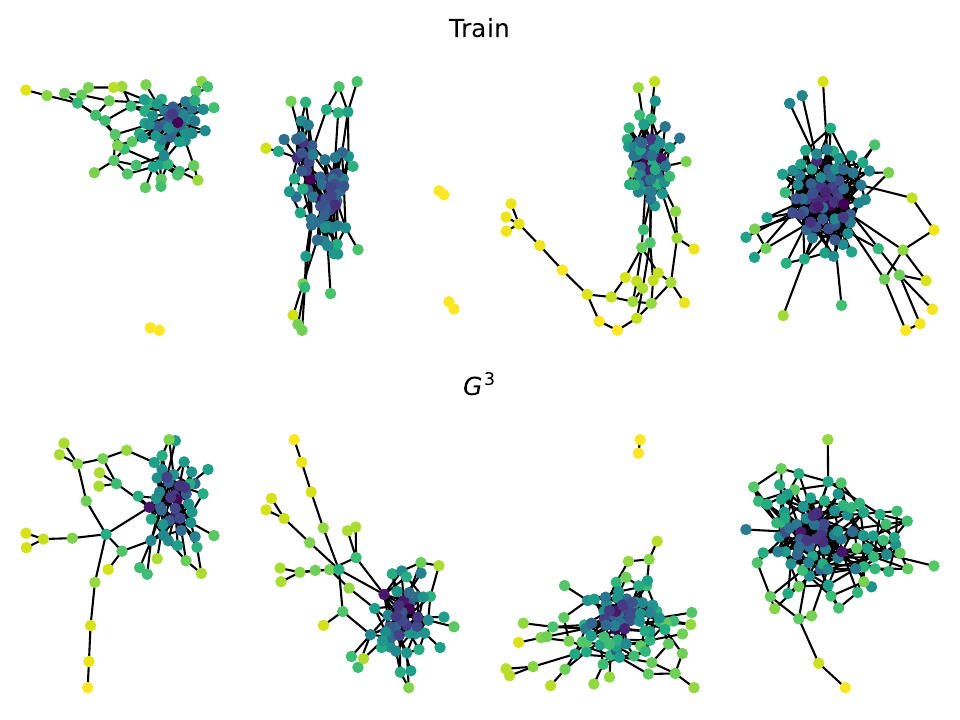}
        \caption{}\label{fig:example_dcsbm}
    \end{subfigure}
        \begin{subfigure}[h]{\ExampleFigScale\textwidth}
        \includegraphics[width=\linewidth]{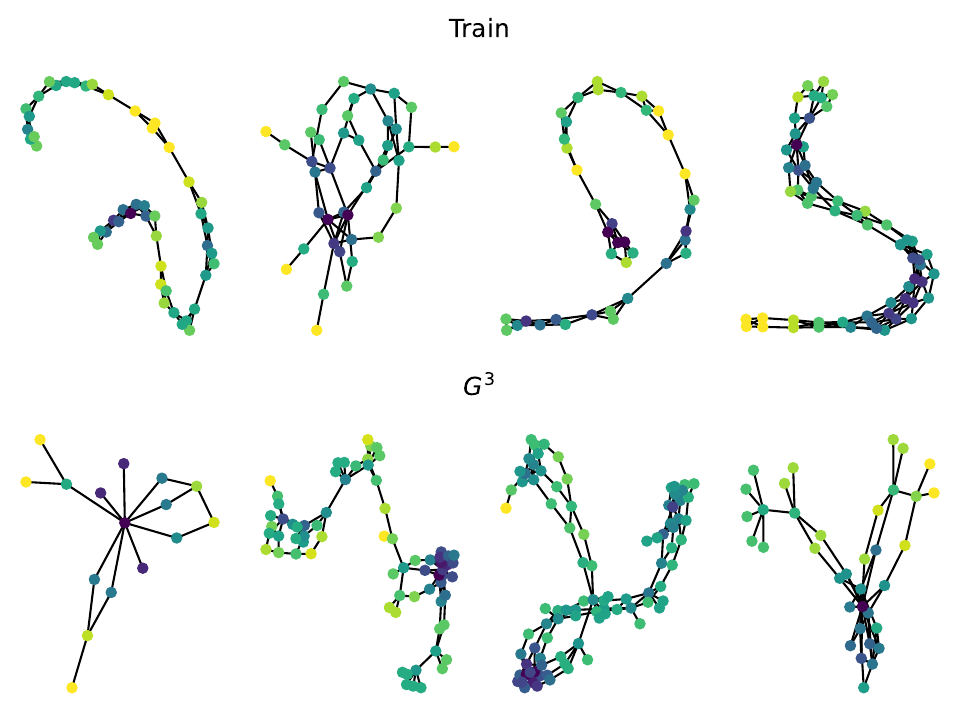}
        \caption{}\label{fig:example_proteins}
    \end{subfigure}
        \begin{subfigure}[h]{\ExampleFigScale\textwidth}
        \includegraphics[width=\linewidth]{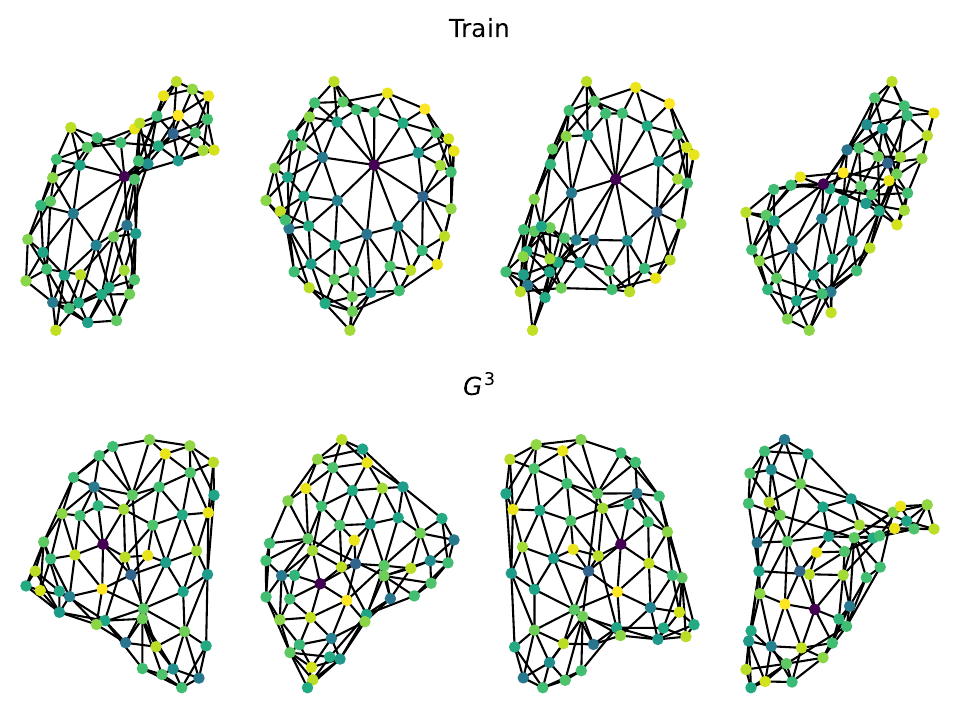}
        \caption{}\label{fig:example_planar}
    \end{subfigure}
        \begin{subfigure}[h]{\ExampleFigScale\textwidth}
        \includegraphics[width=\linewidth]{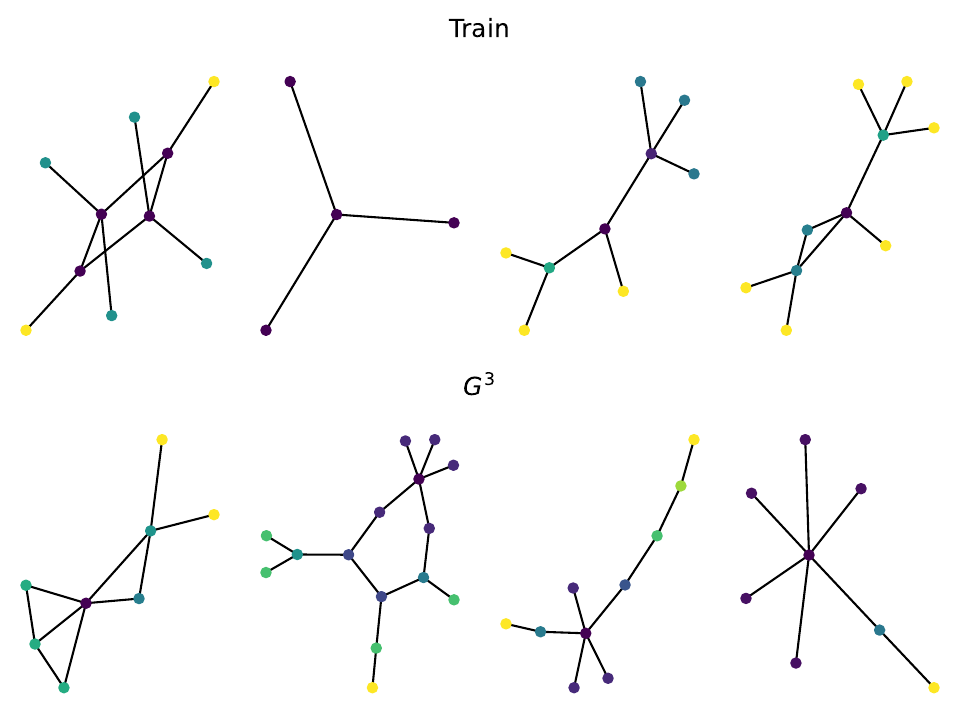}
        \caption{}\label{fig:example_qm9}
    \end{subfigure}
     \caption{A series of example graphs drawn at random from a training set (top) and the output of $G^3$ (bottom). \subref{fig:example_dcsbm}: DCSBM, \subref{fig:example_proteins}: Protein, \subref{fig:example_planar}: Planar, \subref{fig:example_qm9}: QM9. Node colouring reflects Forman-Ricci curvature at that node.}
    \label{fig:example_graphs}
\end{figure}


\begin{figure*}[t]
    \centering
     \includegraphics[scale=0.5]{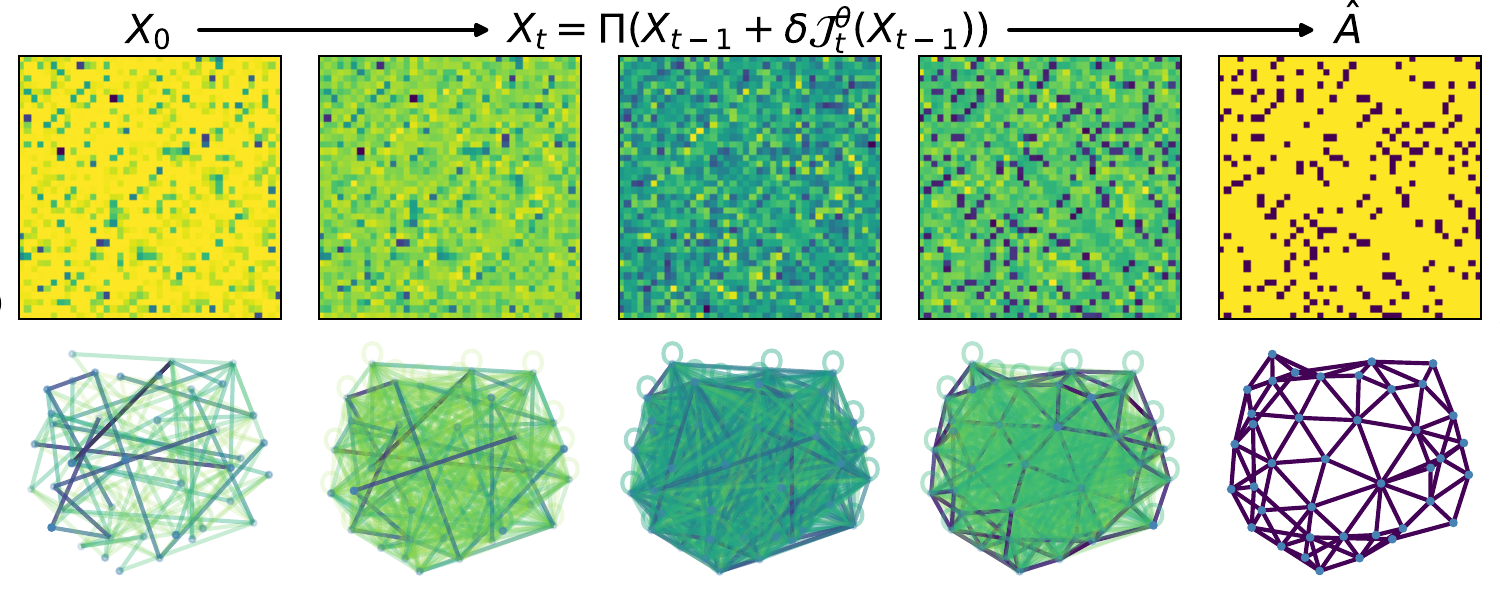}
     \caption{Example reconstruction of a graph from a sample from the base distribution.}
    \label{fig:reverse_process}
\end{figure*}


\end{document}